\def\BibTeX{{\rm B\kern-.05em{\sc i\kern-.025em b}\kern-.08em
		T\kern-.1667em\lower.7ex\hbox{E}\kern-.125emX}}
\newcommand{\EQ}{\begin{eqnarray}}
	\newcommand{\EN}{\end{eqnarray}}
\newtheorem{thm}{Theorem}
\newcommand*{\rowstyle}[1]{% sets the style of the next row
	\gdef\@rowstyle{\leavevmode#1}%
	\@rowstyle\ignorespaces}
\begin{document}
%
% paper title
% Titles are generally capitalized except for words such as a, an, and, as,
% at, but, by, for, in, nor, of, on, or, the, to and up, which are usually
% not capitalized unless they are the first or last word of the title.
% Linebreaks \\ can be used within to get better formatting as desired.
% Do not put math or special symbols in the title.
\title{Analytic Learning of Convolutional Neural Network For Pattern Recognition}
%
%
% author names and IEEE memberships
% note positions of commas and nonbreaking spaces ( ~ ) LaTeX will not break
% a structure at a ~ so this keeps an author's name from being broken across
% two lines.
% use \thanks{} to gain access to the first footnote area
% a separate \thanks must be used for each paragraph as LaTeX2e's \thanks
% was not built to handle multiple paragraphs
%
%
%\IEEEcompsocitemizethanks is a special \thanks that produces the bulleted
% lists the Computer Society journals use for "first footnote" author
% affiliations. Use \IEEEcompsocthanksitem which works much like \item
% for each affiliation group. When not in compsoc mode,
% \IEEEcompsocitemizethanks becomes like \thanks and
% \IEEEcompsocthanksitem becomes a line break with idention. This
% facilitates dual compilation, although admittedly the differences in the
% desired content of \author between the different types of papers makes a
% one-size-fits-all approach a daunting prospect. For instance, compsoc 
% journal papers have the author affiliations above the "Manuscript
% received ..."  text while in non-compsoc journals this is reversed. Sigh.

\author{Huiping~Zhuang,~\IEEEmembership{Member,~IEEE,}
        Zhiping~Lin,~\IEEEmembership{Senior Member,~IEEE,}
        Yimin~Yang,~\IEEEmembership{Senior Member,~IEEE,}
        and~Kar-Ann~Toh$^{*}$,~\IEEEmembership{Senior Member,~IEEE,}% <-this % stops a space
\IEEEcompsocitemizethanks{\IEEEcompsocthanksitem Huiping Zhuang and Zhiping Lin are with the School of Electrical and Electronic Engineering, Nanyang Technological University, Singapore.\protect\\
E-mail: huiping001@e.ntu.edu.sg; ezplin@ntu.edu.sg.
\IEEEcompsocthanksitem Yimin Yang 
is with the Computer Science Department, Lakehead University,
Thunder Bay, Ontario P7B 5E1, Canada.\protect\\
E-mail: yyang48@lakeheadu.ca. 
\IEEEcompsocthanksitem Kar-Ann Toh is with the Department of Electrical and Electronic Engineering, Yonsei University.\protect\\
E-mail: katoh@yonsei.ac.kr. 
}% <-this % stops a space
\thanks{Corresponding author: Kar-Ann Toh.}}

\IEEEtitleabstractindextext{%
\begin{abstract}
Training convolutional neural networks (CNNs) with back-propagation (BP) is time-consuming and resource-intensive particularly in view of the need to visit the dataset multiple times. In contrast, analytic learning attempts to obtain the weights in one epoch. However, existing attempts to analytic learning considered only the multilayer perceptron (MLP). In this article, we propose an analytic convolutional neural network learning (ACnnL). Theoretically we show that ACnnL builds a closed-form solution similar to its MLP counterpart, but differs in their regularization constraints. Consequently, we are able to answer to a certain extent why CNNs usually generalize better than MLPs from the implicit regularization point of view. The ACnnL is validated by conducting classification tasks on several benchmark datasets. It is encouraging that the ACnnL trains CNNs in a significantly fast manner with reasonably close prediction accuracies to those using BP. Moreover, our experiments disclose a unique advantage of ACnnL under the small-sample scenario when training data are scarce or expensive.
\end{abstract}

% Note that keywords are not normally used for peerreview papers.
\begin{IEEEkeywords}
	pattern recognition, neural network learning, close-form solution, analytic learning, convolutional neural network, small-sample.
\end{IEEEkeywords}}

% make the title area
\maketitle

% To allow for easy dual compilation without having to reenter the
% abstract/keywords data, the \IEEEtitleabstractindextext text will
% not be used in maketitle, but will appear (i.e., to be "transported")
% here as \IEEEdisplaynontitleabstractindextext when compsoc mode
% is not selected <OR> if conference mode is selected - because compsoc
% conference papers position the abstract like regular (non-compsoc)
% papers do!
\IEEEdisplaynontitleabstractindextext
% \IEEEdisplaynontitleabstractindextext has no effect when using
% compsoc under a non-conference mode.

% For peer review papers, you can put extra information on the cover
% page as needed:
% \ifCLASSOPTIONpeerreview
% \begin{center} \bfseries EDICS Category: 3-BBND \end{center}
% \fi
%
% For peerreview papers, this IEEEtran command inserts a page break and
% creates the second title. It will be ignored for other modes.
\IEEEpeerreviewmaketitle

\ifCLASSOPTIONcompsoc
\IEEEraisesectionheading{\section{Introduction}\label{sec:introduction}}
\else
\section{Introduction}
\label{sec:introduction}
\fi
\IEEEPARstart{N}{eural} networks have become increasingly popular owing to the outstanding performance in various complex tasks, e.g., image-based tasks including image classification \cite{imagenet2012}, object detection \cite{rcnn2014,detection2019TPAMI} and image segmentation \cite{segmentation2020TPAMI,unet2015}. However, these remarkable achievements have been built upon an exceedingly amount of training time even with the aid of modern parallel computing tools such as GPUs. A main cause is due to training based on the back-propagation (BP) \cite{bp1974phd} through multiple visits of the dataset. Moreover, such training mechanism often runs into issues related to search convergence. Firstly, updating the network through calculating gradients using BP could encounter vanishing or exploding gradients. Secondly, the nature of iteration-based learning could invite slow convergence given inappropriate initializations. Thirdly, BP learning requires multiple visits (i.e., epochs) of the dataset to acquire the necessary information. One usually needs hundreds of epochs before the training is complete, rendering the learning time-consuming and resource-demanding. Finally, the tunning of hyperparameters such as the learning rate, relies heavily on human experience, and without it the tunning could become very laborious.

These limitations of BP motivate the development of alternative training schemes. Several recent attempts adopted the \textit{analytic learning} \cite{pil2001,karnet2018,cpnet2021} to train the neural network. Unlike the BP training paradigm, these analytic learning methods convert a nonlinear network learning problem into linear segments, which are then tackled by solving matrix equations. Apart from circumventing the aforementioned BP's limitations, the analytic learning also has advantages in network interpretability due to its simple matrix formulation in training \cite{pil2001,interpretable-cnn2019}.

Conventional analytic learning started with shallow networks \cite{analytic-learning-first1991,analytic-learning1992orthogonal,analytic-learning1992inversion,analytic-learning1993}. A good example is the well-known radial basis function (RBF) network \cite{rbf1991univ}. Its parameters in the second layer are trained  using a closed-form least squares (LS) solution after conducting a feature transformation in the first layer. The analytical solutions have also evolved towards training multilayer networks \cite{karnet2018,analytic-learning-finallayer2019noniterative,cpnet2021,brmp2021,kpnet2021training,pil2021,Recomputation2020TPAMI}, receiving an increasing attention in recent years. For instance, the correlation projection network (CPNet) \cite{cpnet2021} partitions an multilayer perceptron (MLP) into multiple 2-layer subnetworks which are trained in a locally supervised manner. The most evident benefit of analytic learning for training neural networks is the fast single shot learning for training data of small to medium sizes.

The analytic learning has indeed shown promising fast training speed owing to its one-epoch training style. Yet, current studies can only handle the training of MLP weights, which can be referred to as MLP-based analytic learning. It is a natural motivation to extend such a technique towards more complex networks, such as CNNs. There have been efforts (e.g., \cite{stackcnn2020,analytic-learning-finallayer2019noniterative}) of involving  the CNN training with analytic learning, but only the MLP weights at the final layer are analytically determined. There has not been any successful attempt for analytic learning of CNN weights. Such an analytic learning is challenging due to CNN's sparse structure (e.g., see \cite{sedghi2018the} and section 2.4 \cite{MAIER201986}). The solution to CNN weight computation remains conjectural and primitive \cite{brmp2021}. This has motivated us to explore solutions that push forward the analytic learning for fast training of more complex networks.

In this paper, we propose an \textbf{A}nalytic \textbf{C}onvolutional \textbf{n}eural \textbf{n}etwork \textbf{L}earning (ACnnL) for effective single-shot learning. Unlike existing analytic learning methods, the ACnnL is able to obtain CNN weights using analytical solutions in a one-epoch training manner. This paper contributes as follows.
\begin{itemize}
	\item To the best of our knowledge, the proposed ACnnL is the first analytic algorithm for training CNNs. It obtains CNN weights in closed-form while completing the training in a single epoch.
	\item We demonstrate that the CNN structure can be treated as that of a generalized MLP, but differs in that the CNN is regularized by significantly more constraints. In this regard, we can justify to a certain extent why CNNs generalize better than MLPs (i.e., CNNs are heavily regularized).
	
	\item In comparison with its BP counterpart, the ACnnL completes the training with a significantly faster speed (e.g., $>100\times$) and with a reasonably close generalization performance.
	
	\item On several benchmark datasets, the ACnnL is shown to significantly outperform the MLP-based analytic learning methods in terms of generalization. In particular, for small-sample datasets, the ACnnL is shown to excel in prediction comparing with that of BP. That is, the ACnnL is less prone to over-fitting for data-limited scenarios naturally due to the well-conditioned LS-based solutions.  
\end{itemize}

\section{Related Works}
\subsection{MLP-based Analytic Learning}
Existing methods of analytic learning are mostly restricted to the learning of MLP weights. These MLP-based methods begin at shallow networks (i.e., 2-layer networks) where the weight of the second layer is computed analytically after setting the first layer. For instance, the RBF network \cite{rbf1991univ} is mainly characterized by the kernel projection in the first layer. Random-weighted networks \cite{random-first1992,pil-first1995exact,pil2001}, another category of shallow networks, are identified by the random weight selection in the first layer. 

Extension of analytic learning to multilayer/deep networks has also been fruitful. The main difficulty behind analytic learning in multilayer networks is to deal with the learnings in hidden layers. One particular solution is to project the target label information into the hidden layers such that these hidden layers can guide their own supervised learnings.  In \cite{karnet2018}, a kernel-and-range network (KARnet) projects the label information into hidden layers utilizing a sequence of Moore-Penrose (MP) inverses \cite{MP-inverse-book2003}. However, the KARnet requires the activation function to be invertible. Its generalization performance is less ideal especially on datasets with large sample size. The random-weighted networks are also extended to multiple layers \cite{elm-multilayer2017multiple} with a constraint of utilizing the invertible activation function like that in the KARnet. The correlation projection network (CPNet) in \cite{cpnet2021} adopts an alternative projection strategy named label encoding. The combination of correlation projection and the label encoding technique allows the CPNet to become one of the MLP-based state-of-the-art analytic learning methods.

\subsection{Analytic Learning Beyond the MLP Structure}
There have also been attempts and prototypes that incorporate analytic learning beyond the MLPs. Most of these methods are more related to \textit{transfer learning} where the non-MLP networks are pre-trained elsewhere and used as feature extractors. The RBM-GI \cite{analytic-learning-finallayer2019noniterative} adopts a feature extractor that has been pre-trained using the restricted Boltzmann machines (RBMs). Features are extracted by the RBMs and mapped to the label with an LS estimator. The deep analytic network (DAN) \cite{stackcnn2020} with transfered features obtains remarkable results. However, aside from needing the feature extractor, the DAN requires fine-tunning of the network with BP to achieved the best performance. There are also methods without adopting pre-trained networks. In \cite{interpretable-cnn2019}, a method named feedforward (FF) design adopts principal component analysis (PCA) to construct the CNN layers which are then followed by MLPs trained using the LS regression. The FF design trains the CNNs with significantly lower cost but expects a considerable loss of generalization performance. 

\noindent\textbf{Other Non-BP Learnings:} There are also other variants related to analytic learning (e.g., non-iterative or non-BP) in the form of stacking structure. The PCAnet \cite{pcanet2012scalable} stacks PCA modules to learn multistage filter banks for extracting features. Like the analytic learning, the deep forest (DF) \cite{deepforest2017} also abandons the BP algorithm to improve the learning efficiency.

In summary, variants of the analytic learning attempts could be deployed across various networks structures. However, only the weights of MLP are obtained analytically due to the difficulty to formulate closed-form solutions beyond the MLP structure. In \cite{brmp2021}, the CNN is treated as a special case of a sparse MLP network, and a prototype of analytic learning for CNN weights is constructed. However, the formulation is primitive due to nontrivial assumptions and constraints. The proposed ACnnL proposed in this paper is able to train CNN weights with closed-form solutions, which closes an important research gap in the literature of analytic learning.

%\subsection{Generalization Performance of Analytic Learning}
%\textcolor{red}{In comparison with the traditional BP-trained networks, the MLPs trained analytically gave good performance in simple datasets such as UCI and MNIST datasets, but their performance become incomparable when dealing with relatively sophisticated datasets such as CIFAR-10. MNIST: both around 98\%-99\% \cite{cpnet2021} while on CIFAR-10 an MLP trained analytical only obtains an accuracy of around 50\% while the BP-trained one can give approximately 65\%. In this paper, networks trained by the proposed method can give an accuracy of 70\% (BP-trained network )on CIFAR-10, improving the current state-of-the-art analytic learning methods by a large margin (e.g., from $\approx$50\% to $\approx$70\%). Not that the 70\% in this paper is achieved by a vanilla 5-layer CNN structure. In this paper, we refrain the performance comparison to vanilla CNN since our method is the first CNN-based analytic learning. By far the 70\% accuracy is the highest accuracy achievable by analytic learning methods.} 

\section{Preliminaries}
Here some background knowledge related to analytic learning is revisited. This mainly includes the LS solution, the CNN structure and its representation.

\subsection{Notations}
For consistency, we unify the notations used in this paper as follows. Without specification, a non-boldface character (e.g., $a$ or $A$) denotes a scalar number. A boldface lowercase character (e.g., $\bm{a}$) represents a column vector. A boldface uppercase character (e.g., $\bm{A}$) indicates a matrix or a tensor\footnote{Also known as multidimensional array.}. $\bm{a}\in\mathbb{R}^{K}$ denotes a column vector $\bm{a}$ with $K$ entries. A 3-dimensional ($3$-D) tensor is represented, for instance, by $\bm{A}\in\mathbb{R}^{N\times K \times T}$ of size $N\times K \times T$. Also in this example, $\bm{A}[n_{1}:n_{2}, k_{1}:k_{2}, :]$ indicates a sliced sub-tensor from $\bm{A}$ with $N,K$ indexes ranging from $n_{1}$ to $n_{2}$ and $k_{1}$ to $k_{2}$ respectively. In particular, in the last dimension $T$, the sub-tensor takes all the entries by marking a colon ``$:$''.

Here we define square bracket $[$ $]$ as the concatenation operator, which concatenates multiple vectors/matrices into one. For instance, let $\bm{A}_{1}\in\mathbb{R}^{a_{1}\times b_{1}}$, $\bm{A}_{2}\in\mathbb{R}^{a_{2}\times b_{2}}$ and $\bm{A}_{3}\in\mathbb{R}^{a_{3}\times b_{3}}$. If $a_{1}=a_{2}=a_{3}$, we can have $\bm{C} = [\bm{A}_{2}, \bm{A}_{2}, \bm{A}_{3}]\in\mathbb{R}^{a_{1}\times (b_{1}+b_{2}+b{3})}$, and if $b_{1}=b_{2}=b_{3}$, we can have $\bm{C} = \begin{bmatrix}
	\bm{A}_{1}\\ \bm{A}_{2}\\ \bm{A}_{3}
\end{bmatrix}\in\mathbb{R}^{(a_{1}+a_{2}+a_{3})\times b_{1}}$.

\begin{figure*}
	\centering
	\includegraphics[width=1\textwidth]{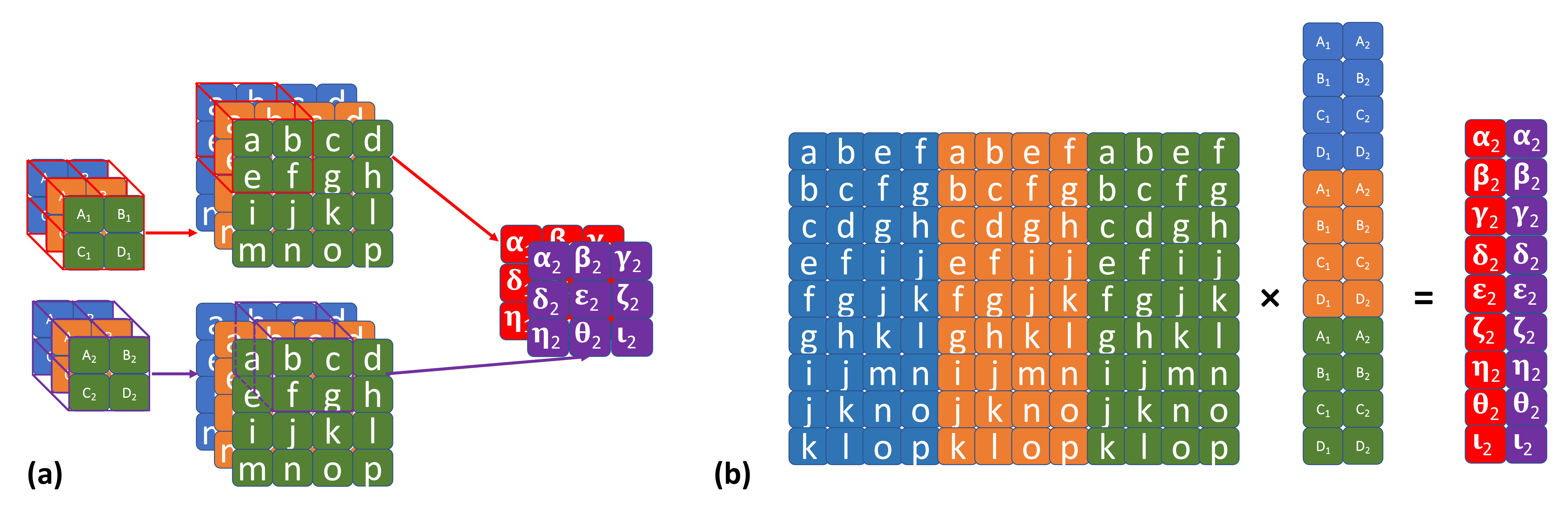}
	\caption{The operation of a convolutional layer.  Traditionally the convolution operation is interpreted through (a) looping the convolution of the block filter with image sub-blocks. (b) The convolution can also be equivalently implemented using a matrix multiplication with the filter weights flattened and the feature maps arranged in matrix form. This is similar to the operation called general matrix multiplication (or image-to-column transformation) \cite{img2col2017ASAP}.}
	\label{fig:conv_matrix}
\end{figure*}
\subsection{Least Square Problem}
Assume that there is a dataset with $N$ data samples $\{x_{n1}, x_{n2}, \dots, x_{nK}, z_{n}\}_{n=1}^{N}$. Specifically, each of the $n^{\text{th}}$ sample has $K$ features with $x_{nk}$ being the $k^{\text{th}}$ feature, and $z_{n}$ being the label/output. Suppose that these features follow a linear combination to represent the label. That is,
\begin{align}\label{eq_ls_scalar}
	z_{n} = w_{1}x_{n1} + w_{2}x_{n2} + \dots + w_{K}x_{nK} + \epsilon_{n}
\end{align}
where $\epsilon_{n}$ is a noise term with Gaussian distribution with zero mean. On the $N$-sample dataset, the formulation in \eqref{eq_ls_scalar} can be written in matrix form:
\begin{align}\label{eq_LR}
	\bm{z} = \bm{X}\bm{w} + \bm{\epsilon}
\end{align}
where $\bm{w}\in\mathbb{R}^{K}$, $\bm{z}\in\mathbb{R}^{N}$ and $\bm{X}\in\mathbb{R}^{N\times K}$ are the weight vector, the label vector, and the data matrix respectively, constructed via
\begin{align*}
	\resizebox{1\linewidth}{!}{$
		\bm{w} = \begin{bmatrix} w_{1}\\w_{2}\\\vdots\\w_{K} \end{bmatrix},
		\bm{z} = \begin{bmatrix} z_{1}\\z_{2}\\\vdots\\z_{N} \end{bmatrix},
		\bm{x}_{n} = \begin{bmatrix} x_{n1}\\x_{n2}\\\vdots\\x_{nK} \end{bmatrix},
		\bm{X} = \begin{bmatrix} \bm{x}_{1}^{T}\\ \bm{x}_{2}^{T}\\ \vdots\\ \bm{x}_{N}^{T} \end{bmatrix}
		\bm{\epsilon} = \begin{bmatrix} \epsilon_{1}\\\epsilon_{2}\\\vdots\\\epsilon_{N} \end{bmatrix}.$}
\end{align*}
The formulation in \eqref{eq_LR} is the model for linear regression. To search for an optimal $\bm{w}$, one minimizes the following cost function:
\begin{align}\label{pro_L2}
	\underset{\bm{w}}{\text{argmin}} \quad \left\lVert\bm{z} - \bm{X}\bm{w}\right\rVert_{2}^{2} + {\gamma} \left\lVert\bm{w}\right\rVert_{2}^{2}
\end{align}
where $\left\lVert\cdot\right\rVert_{2}$ indicates the $l_{2}$-norm. The best estimate $\bm{\hat w}$ for \eqref{pro_L2} is given by:
\begin{align}\label{eq_LS}
	\bm{\hat w} = (\bm{X}^{T}\bm{X} + \gamma\bm{I})^{-1}\bm{X}^{T}\bm{z}
\end{align}
where $\gamma$ is a factor that controls the regularization of $\bm{w}$. The solution in \eqref{eq_LS} is optimal since the cost function $\left\lVert\bm{z} - \bm{X}\bm{\hat w}\right\rVert_{2}^{2} + {\gamma} \left\lVert\bm{\hat w}\right\rVert_{2}^{2}$ is the smallest given any $\bm{w}\in\mathbb{R}^{K}$. For $\gamma\to 0$, \eqref{eq_LS} can be rewritten as
\begin{align}\label{eq_LS_MP}
	\bm{\hat w} = \bm{X}^{\dagger}\bm{z}
\end{align}
where $^{\dagger}$ is the MP inverse operator.

\subsection{Representation of Convolutional Neural Networks}\label{subsection_cnn}
The operation in CNNs is well interpreted through a graphical aid (see Fig. \ref{fig:conv_matrix}(a)), where the convolution is conducted in an iterative manner. Here we put the operation in detail as explicit formulas to prepare for the subsequent development of ACnnL.

We use $\bm{X}_{l}\in\mathbb{R}^{C_{l}\times W_{l} \times H_{l}}$, a $3$-D tensor to represent the input feature map at layer $l$. In detail, $\bm{X}_{l}$ is a feature map having $C_{l}$ channels with  $W_{l}\times H_{l}$ pixels in each channel. Let $\bm{W}^{[j]}_{l}\in\mathbb{R}^{C_{l}\times K_{l}\times K_{l}}$ be the $j^{\text{th}}$ filter weight with kernel size $K_{l}\times K_{l}$ at layer $l$. Also let $\bm{Z}_{l}$ be the output tensor at layer $l$. A convolution operation using filters $\{\bm{W}^{[1]}_{l},\bm{W}^{[2]}_{l},\dots,\bm{W}^{[J]}_{l}\}$ on $\bm{X}_{l}$ yields
\begin{align}\label{eq_conv_original}
	\bm{Z}_{l} = f_{\text{conv}}(\bm{X}_{l},\{\bm{W}^{[1]}_{l},\bm{W}^{[2]}_{l},\dots,\bm{W}^{[J]}_{l}\})
\end{align}
where $f_{\text{conv}}$ denotes the convolution operator. Let $P$ and $S$ indicate the padding and striding operations. After the convolution, we have $\bm{Z}_{l}\in\mathbb{R}^{J\times([(W_{l}-K+2P)S]+1)\times([(H_{l}-K+2P)S]+1)}$. For simplicity, in this paper we adopt zero padding and a stride of one (i.e., $\bm{Z}_{l}\in\mathbb{R}^{J\times(W_{l}-K_{l}+1)\times(H_{l}-K_{l}+1)}$) for illustration purpose. The convolution operation can then be formulated via
\begin{align}\label{eq_conv_loop}
	\bm{Z}_{l}[j,w,h] = \sum_{c=1}^{C_{l}}\bm{X}_{l}[c,w:w+K_{l},h:h+K_{l}]\odot\bm{W}^{j}_{l}[c,:,:]
\end{align}
where $\odot$ is an operator conducting element-wise multiplication between two matrices, with $1\le j\le J$, $1\le w\le (W_{l}-K_{l}+1)$, and $1\le h\le (H_{l}-K_{l}+1)$.

%with $x_{c,w,h}$ being its element. 
%{\color{red}without considering the padding, bias, and..}
%{\color{red} [(W-K+2P)S]+1}

\section{Analytic Convolutional Neural Network Learning}
\begin{figure*}
	\centering
	\includegraphics[width=1\linewidth]{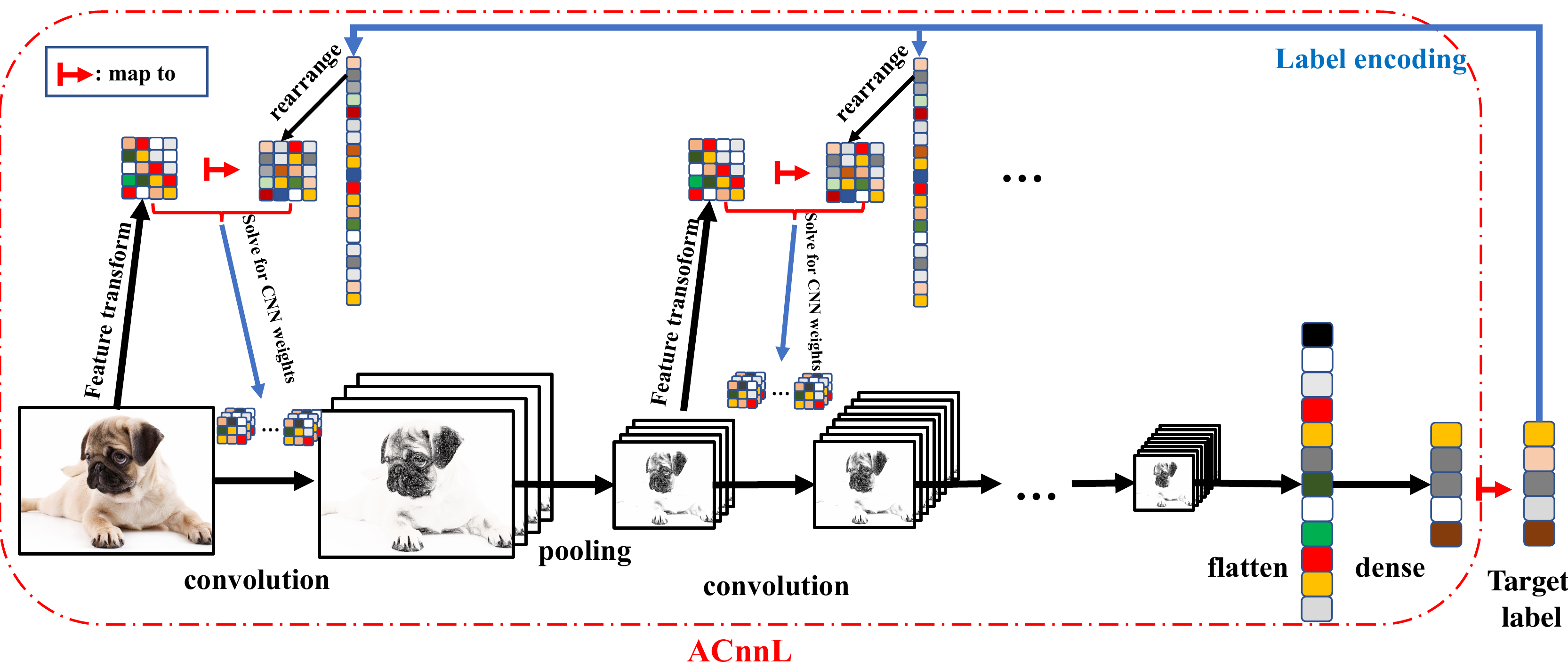}
	\caption{The general structure of the proposed ACnnL. The ACnnL includes several steps. Firstly, the label information is projected into hidden CNN layers using a linear transformation to cope with dimension mismatch. Secondly, for each CNN layer, the input feature map and CNN weights are converted into 2-D matrices according to the feature and weight transformations (e.g., see Fig. \ref{fig:conv_matrix}(b)). Finally, the CNN weights are computed analytically by conducting a linearized locally supervised learning (e.g., see \eqref{eq_encoder}, \eqref{eq_acnn} and \eqref{eq_activation}).}
	\label{fig:acnn-flow}
\end{figure*}

Here we present the proposed ACnnL in details. Essentially, we reformulate the iteration-based interpretation of CNN in Section \ref{subsection_cnn} into a matrix multiplication form. Subsequently, we adopt the label encoding technique \cite{cpnet2021} to introduce the label information in each hidden CNN layer. This allows us to convert the nonlinear CNN learning into a linear one where the CNN weight can be analytically trained. 

To the best of our knowledge, the ACnnL is the first analytic learning algorithm for training CNNs. For simplicity, in this paper we only discuss the vanilla CNN without considering advanced structures (e.g., ResNet \cite{resnet2016}). Specifically, we discuss an $L$-layer vanilla CNN structure containing $L-1$ CNN layers followed by one MLP. The general structure of ACnnL is shown in Fig. \ref{fig:acnn-flow}.
\subsection{Convolution Represented by Matrix Multiplication}\label{subsection_cnn_mtrx}
We shall derive an equivalent matrix multiplication form for the conventional formulation in \eqref{eq_conv_loop} (see Fig. \ref{fig:conv_matrix}(b)). The first step is to combine the set of filter weights $\{\bm{W}^{[1]}_{l},\bm{W}^{[2]}_{l},\dots,\bm{W}^{[J]}_{l}\}$ into a single matrix. To this end, the \textit{flattening operator} $\mathcal{F}$ is defined, which flattens a $m$-D tensor into an ($m-1$)-D one. For instance, for a 3-D tensor $\bm{A}[i,:,:]=\bm{A}_{i}$ ($i=1,2,3,4$), $\mathcal{F}(\bm{A}) = [\bm{A}_{1}\ \bm{A}_{2}\ \bm{A}_{3}\ \bm{A}_{4}]^{T}$. Conversely, $\mathcal{F}^{-1}$ indicates the inversion process, i.e., $\mathcal{F}^{-1}(\mathcal{F}(\bm{A})) = \bm{A}$.

We flatten and concatenate the weight tensors into a 2-D matrix as follows:
\begin{align}\label{eq_w_trans}
	\bm{\mathcal{W}}_{l} = ([\mathcal{F}^{2}(\bm{W}_{l}^{[1]}), \mathcal{F}^{2}(\bm{W}_{l}^{[2]}),\dots,\mathcal{F}^{2}(\bm{W}_{l}^{[J]})])
\end{align}
where $\bm{\mathcal{W}}_{l}\in\mathbb{R}^{(C_{l}K_{l}^{2})\times J}$. The notation $\mathcal{F}^{2}$ is adopted, indicating that the flattening operation is employed twice such that the 3-D tensor $\bm{W}_{l}^{j}$ is brought into a column vector. For convenience, the above weight-flattening operation can be written as $f_{\text{wflat}}$:
\begin{align}\label{eq_w_flatten}
	\bm{\mathcal{W}}_{l} = f_{\text{wflat}}(\bm{W}^{[1]}_{l},\bm{W}^{[2]}_{l},\dots,\bm{W}^{[J]}_{l}).
\end{align}

For feature map $\bm{X}_{l}$, we do similar operations to convert it into a 2-D matrix. This include two steps. Firstly, we flatten the matrices along the channel axis on patches of the feature map , i.e.,  $\bm{X}_{l}[c,w:w+K_{l},h:h+K_{l}]$ in \eqref{eq_conv_loop}, and concatenate them as follows:
\begin{align}\nonumber
	\bm{c}_{l,w,h} = [&
	\mathcal{F}(\bm{X}_{l}[1,w:w+K_{l},h:h+K_{l}])^{T},\\\nonumber
	&\mathcal{F}(\bm{X}_{l}[2,w:w+K_{l},h:h+K_{l}])^{T},
	\dots,\\\label{eq_feature_flatten}
	&\mathcal{F}(\bm{X}_{l}[C_{l},w:w+K_{l},h:h+K_{l}])^{T}
	]
\end{align}
where $\bm{c}_{l,w,h} \in\mathbb{R}^{1\times C_{l}K_{l}^{2}}$ is a row vector. Next, we further concatenate these vectors into a matrix:
\begin{align}\label{eq_feature_flatten2}
	\bm{\mathcal{X}}_{l}
	&=\begin{bmatrix}
		\bm{c}_{l,1,1}\\
		\bm{c}_{l,2,1}\\
		\vdots\\
		\bm{c}_{l,(W_{l}-K_{l}+1),1}\\
		\bm{c}_{l,1,2}\\
		\bm{c}_{l,2,2}\\
		\vdots\\
		\bm{c}_{l,(W_{l}-K_{l}+1),2}\\
		\vdots\\
		\bm{c}_{l, (W_{l}-K_{l}+1)(H_{l}-K_{l}+1)}
	\end{bmatrix}  
\end{align}
where $\bm{\mathcal{X}}_{l}\in\mathbb{R}^{(W_{l}-K_{l}+1)(H_{l}-K_{l}+1)\times C_{l}K_{l}^{2}}$ is a transformed 2-D feature matrix. The above steps can be compactly  denoted  as the following transform $\mathcal{I}$
\begin{align}\label{eq_feature_flatten3}
	\bm{\mathcal{X}}_{l}&= \mathcal{I}(\bm{X}_{l})
\end{align}
which maps the feature map into a 2-D matrix. Conversely, $ \mathcal{I}^{-1}$ performs the opposite operation, i.e., $ \mathcal{I}^{-1}( \mathcal{I}(\bm{X}_{l})) = \bm{X}_{l}$.

Hence, the convolution process in \eqref{eq_conv_loop} can be equivalently converted into a matrix multiplication form as follows:
\begin{align}\nonumber
	\mathcal{F}(\bm{Z}_{l}) &= \mathcal{I}(\bm{X}_{l})f_{\text{wflat}}(\bm{W}^{[1]}_{l},\bm{W}^{[2]}_{l},\dots,\bm{W}^{[J]}_{l})\\\label{eq_cnn_matirx_multiplication}
	&= \bm{\mathcal{X}}_{l}\bm{\mathcal{W}}_{l}
\end{align}
where $\mathcal{F}(\bm{Z}_{l})\in\mathbb{R}^{J\times(W_{l}-K_{l}+1)(H_{l}-K_{l}+1)}$. The output tensor $\bm{Z}_{l}$ identical to that obtained in \eqref{eq_conv_loop} can be adopted by re-arranging the elements through $\mathcal{F}^{-1}(\mathcal{F}(\bm{Z}_{l}))$. The transformation in \eqref{eq_w_trans}-\eqref{eq_cnn_matirx_multiplication} is similar to the operation called general matrix multiplication (or image-to-column transformation) \cite{img2col2017ASAP} implemented by various deep learning platforms (e.g., PyTorch \cite{pytorch2019NeurIPS}) to accelerate the CNN operations.

Owing to the above transformations, the convolution operation can be implemented in matrix product form. Such a representation is in fact a rather practical trick in computers to accelerate the implementation of CNNs. This is because the matrix multiplication can be highly parallelized using acceleration devices such as GPUs. More importantly, we are able to elevate this representation to facilitate analytic learning of the CNN weights, which is formulated in the following subsection.

\subsection{Analytic Learning of CNN Weights}
To formulate an analytic learning of CNN weights, the key is to convert the learning into a linear problem and provide each layer with corresponding label information. Hence, the learning in each hidden layer can be treated as a linear network learning problem where the resultant LS formulation can be solved.
\subsubsection{\bf Label Encoding}
The label information can be projected into hidden layers utilizing the so-called label encoding technique \cite{cpnet2021}. Given an $N$-sample label matrix $\bm{Y}\in\mathbb{R}^{N\times K}$ (e.g., label $Y$ has $K$ classes), pseudo label $\bm{\bar Z}_{l}$ at layer $l$ can be generated via
\begin{align}\label{eq_encoder}
	\bm{\bar Z}_{l} = \bm{Y}\bm{Q}_{l}
\end{align}
where $\bm{Q}_{l}$ is a label encoding matrix generated with each element sampled from a normal distribution. The projection is linear in order to match the dimensionality of $\bm{Z}_{l}$. For instance, if we need to project the label information into a hidden layer with $J(W_{l}-K_{l}+1)(H_{l}-K_{l}+1)$ entries to match $\mathcal{F}(\bm{Z}_{l})$, we have $\bm{Q}_{l}\in\mathbb{R}^{K\times J(W_{l}-K_{l}+1)(H_{l}-K_{l}+1)}$, and $\bm{\bar Z}_{l}\in\mathbb{R}^{N\times J(W_{l}-K_{l}+1)(H_{l}-K_{l}+1)}$. In particular, the projected label can be rearranged by $\mathcal{F}^{-1}(\bm{\bar Z}_{l})$ to have the same shape as that of $\bm{Z}_{l}$.

The encoding does not lose information since we are projecting lower-dimension data into higher ones. This also holds true for classification problems, since the output entries (i.e., number of classes) are normally in smaller number than the number of nodes in hidden layers.
\subsubsection{\bf Linear Formulation and Solution}
By introducing supervised information into hidden layers, we can leverage the label encoding technique to achieve analytic learning. To reach this goal, the matrix representation of CNN in Section \ref{subsection_cnn_mtrx} is the key. The matrix multiplication in \eqref{eq_cnn_matirx_multiplication} allows us to construct a learning problem in a linear manner using the convolution weights $\{\bm{W}^{[1]}_{l},\bm{W}^{[2]}_{l},\dots,\bm{W}^{[J]}_{l}\}$.

\noindent\textbf{Single-sample case}: For simplicity, the single-sample learning case can be formulated as
\begin{align}\label{eq_one}
	\underset{\bm{\mathcal{W}}_{l}}{\mathrm{argmin}}\quad \norm{\bm{\bar Z} -\bm{\mathcal{X}}_{l} \bm{\mathcal{W}}_{l}}_{2}^{2} + {\gamma}\norm{\bm{\mathcal{W}}_{l}}_{2}^{2}.
\end{align}
It is simple to obtain the optimal solution to the linear matrix equation in \eqref{eq_one}, i.e.,
\begin{align}
	\bm{\mathcal{\hat W}}_{l} = (\bm{\mathcal{X}}_{l}^{T}\bm{\mathcal{X}}_{l} + \gamma\bm{I})^{-1}\bm{\mathcal{X}}_{l}^{T}\bm{\bar Z}_{l}.
\end{align}
The above single-sample example is to straightforwardly demonstrates that the CNN filtering weights can be calculated by solving the linear matrix regression problem. Such a layer-wise learning has been proved effectively in various existing analytic learning methods. The more general version (i.e., multiple-sample case) can be extended to in a similar manner, but with additional efforts as shown below.

\noindent\textbf{Multiple-sample case}: To fit multiple samples, the optimization in \eqref{eq_one} must be expanded to
\begin{align}\label{eq_multiple}
	\underset{\bm{\mathcal{W}}_{l}}{\mathrm{argmin}}\quad \sum_{n=1}^{N}\norm{\bm{\bar Z}^{(n)} - \bm{\mathcal{X}}_{l}^{(n)}\bm{\mathcal{W}}_{l}}_{2}^{2} + {\gamma}\norm{\bm{\mathcal{W}}_{l}}_{2}^{2}
\end{align}
where the superscript $[\cdot]^{(n)}$ indicates the $n^{\text{th}}$ sample of the total $N$ samples.

The multiple-sample optimization problem in \eqref{eq_multiple} appears to be of significant modification based on the single-sample case. However, it only needs a fair amount of additional work to obtain the solution. We summarize it in the following Theorem.

\begin{thm}
	The CNN weight matrix $\bm{\mathcal{W}}_{l}$ in \eqref{eq_multiple} that fits multiple training samples $\{\{\bm{\mathcal{X}}_{l}^{(1)},\bm{\bar Z}_{l}^{(1)}\},\{\bm{\mathcal{X}}_{l}^{(2)},\bm{\bar Z}_{l}^{(2)}\},\dots,\{\bm{\mathcal{X}}_{l}^{(N)},\bm{\bar Z}_{l}^{(N)}\}\}$ can be obtained optimally in least squares sense as
	\begin{align}\label{eq_acnn}
		\bm{\mathcal{\hat W}}_{l} =\left(\sum_{n=1}^{N}\bm{\mathcal{X}}_{l}^{(n)T}\bm{\mathcal{X}}_{l}^{(n)}+\gamma\bm{I}\right)^{-1}\left(\sum_{n=1}^{N}\bm{\mathcal{X}}_{l}^{(n)T}\bm{\bar Z}_{l}^{(n)}\right).
	\end{align}
\end{thm}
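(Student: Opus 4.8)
The plan is to recognize \eqref{eq_multiple} as a matrix-valued ridge regression and to solve it by reducing the multiple-sample objective to the single-sample form that was already handled just above, whose minimizer is $\bm{\mathcal{\hat W}}_{l} = (\bm{\mathcal{X}}_{l}^{T}\bm{\mathcal{X}}_{l} + \gamma\bm{I})^{-1}\bm{\mathcal{X}}_{l}^{T}\bm{\bar Z}_{l}$. The cleanest route is a vertical-stacking argument: I would assemble the per-sample transformed feature matrices and encoded labels into single tall matrices
\begin{align*}
	\bm{\mathcal{X}}_{l} = \begin{bmatrix} \bm{\mathcal{X}}_{l}^{(1)} \\ \bm{\mathcal{X}}_{l}^{(2)} \\ \vdots \\ \bm{\mathcal{X}}_{l}^{(N)} \end{bmatrix}, \qquad \bm{\bar Z}_{l} = \begin{bmatrix} \bm{\bar Z}_{l}^{(1)} \\ \bm{\bar Z}_{l}^{(2)} \\ \vdots \\ \bm{\bar Z}_{l}^{(N)} \end{bmatrix},
\end{align*}
observing that every block shares the same column count $C_{l}K_{l}^{2}$, so a single weight $\bm{\mathcal{W}}_{l}$ acts consistently across all of them.

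Next I would show that the stacking leaves the objective unchanged. Since the squared norm of a row-partitioned residual is the sum of the squared norms of its blocks, the data-fidelity term collapses:
\begin{align*}
	\norm{\bm{\bar Z}_{l} - \bm{\mathcal{X}}_{l}\bm{\mathcal{W}}_{l}}_{2}^{2} = \sum_{n=1}^{N}\norm{\bm{\bar Z}_{l}^{(n)} - \bm{\mathcal{X}}_{l}^{(n)}\bm{\mathcal{W}}_{l}}_{2}^{2}.
\end{align*}
Hence \eqref{eq_multiple} is exactly the single-sample problem \eqref{eq_one} posed with the stacked matrices, and its minimizer is the displayed solution. To recover the stated form I would then expand the two Gram products via block multiplication, namely $\bm{\mathcal{X}}_{l}^{T}\bm{\mathcal{X}}_{l} = \sum_{n=1}^{N}\bm{\mathcal{X}}_{l}^{(n)T}\bm{\mathcal{X}}_{l}^{(n)}$ and $\bm{\mathcal{X}}_{l}^{T}\bm{\bar Z}_{l} = \sum_{n=1}^{N}\bm{\mathcal{X}}_{l}^{(n)T}\bm{\bar Z}_{l}^{(n)}$, and substitute them into the minimizer to obtain \eqref{eq_acnn} verbatim.

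A self-contained alternative that avoids leaning on the single-sample claim is to differentiate the convex objective directly: setting the matrix gradient $-2\sum_{n}\bm{\mathcal{X}}_{l}^{(n)T}(\bm{\bar Z}_{l}^{(n)} - \bm{\mathcal{X}}_{l}^{(n)}\bm{\mathcal{W}}_{l}) + 2\gamma\bm{\mathcal{W}}_{l}$ to zero gives the normal equations $(\sum_{n}\bm{\mathcal{X}}_{l}^{(n)T}\bm{\mathcal{X}}_{l}^{(n)} + \gamma\bm{I})\bm{\mathcal{W}}_{l} = \sum_{n}\bm{\mathcal{X}}_{l}^{(n)T}\bm{\bar Z}_{l}^{(n)}$, which rearranges at once to \eqref{eq_acnn}.

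I do not expect a genuine obstacle here; this is textbook ridge regression, and the only points demanding care are bookkeeping rather than depth. Specifically, I would make sure the squared norm in \eqref{eq_multiple} is read as the (Frobenius) sum-of-squares of a matrix residual, so that both the stacking identity and the gradient $2\gamma\bm{\mathcal{W}}_{l}$ of the penalty are legitimate for the matrix-valued unknown $\bm{\mathcal{W}}_{l}$, and I would confirm invertibility of $\sum_{n}\bm{\mathcal{X}}_{l}^{(n)T}\bm{\mathcal{X}}_{l}^{(n)} + \gamma\bm{I}$. The latter is immediate for $\gamma > 0$: each $\bm{\mathcal{X}}_{l}^{(n)T}\bm{\mathcal{X}}_{l}^{(n)}$ is positive semidefinite and $\gamma\bm{I}$ is positive definite, which also makes the objective strictly convex and thereby certifies that the unique stationary point is the global minimizer in the least squares sense.
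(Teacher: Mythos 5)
Your proposal is correct, and your self-contained alternative (differentiate the convex objective, set the matrix gradient to zero, solve the normal equations) is exactly the paper's own proof, line for line. Your primary route, however, is genuinely different: the paper never invokes the stacking reduction in its proof of the theorem, whereas you reduce \eqref{eq_multiple} to the single-sample problem \eqref{eq_one} by vertically concatenating the $\bm{\mathcal{X}}_{l}^{(n)}$ and $\bm{\bar Z}_{l}^{(n)}$, using the block-additivity of the Frobenius norm and block multiplication of the Gram products. Interestingly, this is precisely the representation the paper introduces \emph{after} the theorem, in \eqref{eq_acnn_rewrite}--\eqref{eq_cat_datamatrix}, to analyze representation capacity and the regularization interpretation; your route therefore unifies the proof with that later analysis and makes the equivalence between \eqref{eq_acnn} and the stacked form a consequence rather than a separate rewriting step. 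Your version is also slightly more rigorous than the paper's: the paper sets the derivative to zero and implicitly assumes both invertibility of $\sum_{n}\bm{\mathcal{X}}_{l}^{(n)T}\bm{\mathcal{X}}_{l}^{(n)}+\gamma\bm{I}$ and that the stationary point is the global minimizer, while you justify both via positive semidefiniteness of the Gram terms plus positive definiteness of $\gamma\bm{I}$ for $\gamma>0$, which yields strict convexity and uniqueness. The only caveat worth noting is that the stacking route, as stated, buys you the conclusion only for $\gamma>0$ (or after checking the regularized single-sample solution), but since the theorem's formula \eqref{eq_acnn} presumes the inverse exists, this is no loss.
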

\begin{proof}
	We take derivatives with respect to (w.r.t.) $\bm{\mathcal{W}}_{l}$ on the objective function in \eqref{eq_multiple}, which leads to
	\begin{align}\nonumber
		&\frac{\partial \left(\sum\limits_{n=1}^{N}\norm{\bm{\bar Z}^{(n)} - \bm{\mathcal{X}}_{l}^{(n)}\bm{\mathcal{W}}_{l}}_{2}^{2} + {\gamma}\norm{\bm{\mathcal{W}}_{l}}_{2}^{2}\right)}{\partial \bm{\mathcal{W}}_{l}}\\\nonumber
		&= \sum\limits_{n=1}^{N} \frac{\partial \left(\norm{\bm{\bar Z}^{(n)} - \bm{\mathcal{X}}_{l}^{(n)}\bm{\mathcal{W}}_{l}}_{2}^{2} \right)}{\partial \bm{\mathcal{W}}_{l}}+ 2{\gamma}\bm{\mathcal{W}}_{l}\\\nonumber
		&=-2\sum\limits_{n=1}^{N}\bm{\mathcal{X}}_{l}^{(n)T}(\bm{\bar Z}^{(n)} - \bm{\mathcal{X}}_{l}^{(n)}\bm{\mathcal{W}}_{l}) + 2\gamma\bm{\mathcal{W}}_{l}\\\label{eq_derivative}
		&=   2\big(\sum_{n=1}^{N}\bm{\mathcal{X}}_{l}^{(n)T}\bm{\mathcal{X}}_{l}^{(n)} + \gamma\bm{I}\big)\bm{\mathcal{W}}_{l} - 2\sum_{n=1}^{N}\bm{\mathcal{X}}_{l}^{(n)T}\bm{\bar Z}_{l}^{(n)}.
	\end{align}
	Putting the derivative in \eqref{eq_derivative} to 0 gives 
	\begin{align*}
		&2\big(\sum_{n=1}^{N}\bm{\mathcal{X}}_{l}^{(n)T}\bm{\mathcal{X}}_{l}^{(n)} + \gamma\bm{I}\big)\bm{\mathcal{W}}_{l} - 2\sum_{n=1}^{N}\bm{\bar Z}_{l}^{(n)}\bm{\mathcal{X}}_{l}^{(n)T} =\bm{0}\\
		=>&\big(\sum_{n=1}^{N}\bm{\mathcal{X}}_{l}^{(n)T}\bm{\mathcal{X}}_{l}^{(n)} + \gamma\bm{I}\big)\bm{\mathcal{W}}_{l} = \sum_{n=1}^{N}\bm{\mathcal{X}}_{l}^{(n)T}\bm{\bar Z}_{l}^{(n)}\\
		=>&	\bm{\mathcal{\hat W}}_{l} =\left(\sum_{n=1}^{N}\bm{\mathcal{X}}_{l}^{(n)T}\bm{\mathcal{X}}_{l}^{(n)} + \gamma\bm{I}\right)^{-1}\left(\sum_{n=1}^{N}\bm{\mathcal{X}}_{l}^{(n)T}\bm{\bar Z}_{l}^{(n)}\right) 
	\end{align*}
	which completes the proof.
\end{proof}

For $\gamma\to 0$, we have
\begin{align}\label{eq_acnn_MP}
	\bm{\mathcal{\hat W}}_{l} =\left(\sum_{n=1}^{N}\bm{\mathcal{X}}_{l}^{(n)T}\bm{\mathcal{X}}_{l}^{(n)}\right)^{\dagger}\left(\sum_{n=1}^{N}\bm{\mathcal{X}}_{l}^{(n)T}\bm{\bar Z}_{l}^{(n)}\right).
\end{align}
The form of \eqref{eq_acnn_MP} will be more useful than that of \eqref{eq_acnn} in the following development (e.g., see \eqref{eq_acnn_rewrite}).

After estimating the convolution weights, we feed the input feature map $\bm{\mathcal{X}}_{l}^{(n)}$ to the activation function $g_{l}$ as follows:
\begin{align}\label{eq_activation}
	\bm{\mathcal{X}}_{l+1}^{(n)} = g_{l}(\bm{\mathcal{X}}_{l}^{(n)}\bm{\mathcal{\hat W}}_{l}).
\end{align}
Next, the activations $\{	\bm{\mathcal{X}}_{l+1}^{(1)},	\bm{\mathcal{X}}_{l+1}^{(2)},\dots,	\bm{\mathcal{X}}_{l+1}^{(N)}\}$ serve as inputs to compute the convolution weights at layer $l+1$ according to \eqref{eq_encoder}, \eqref{eq_acnn} and \eqref{eq_activation}. Also, if needed, one can reshape the activation back into the traditional 3-D tensor through
\begin{align}\label{eq_activation2tradition}
	\bm{X}_{l+1}^{(n)} = \mathcal{I}^{-1}(\bm{\mathcal{X}}_{l+1}^{(n)}).
\end{align} 

The training of CNN is conducted in a stacking manner following \cite{cpnet2021,stackcnn2020}. The convolution layers are mainly used for extracting useful features. After completing the training of CNN layers, an MLP layer (e.g., classifier) follows to map the extracted features to the target label.

Firstly, the 3-D feature tensor $\bm{X}_{l}^{(n)}\in\mathbb{R}^{C_{l}\times W_{l}\times H_{l}}$ is flattened into a 2-D matrix $\bm{\mathfrak{X}}$:
\begin{align}\nonumber
	\bm{\mathfrak{X}}_{l} &=f_{\text{wflat}}(\bm{X}_{l}^{(1)},\bm{X}_{l}^{2},\dots,\bm{X}_{l}^{N})\\	\label{eq_flatten_mlp}
	&=([\mathcal{F}^{2}(\bm{X}_{l}^{(1)}), \mathcal{F}^{2}(\bm{X}_{l}^{2}),\dots,\mathcal{F}^{2}(\bm{X}_{l}^{N})])^{T}\in\mathbb{R}^{N\times C_{l}W_{l}H_{l}}.
\end{align}
Subsequently, the flatten feature $\bm{\mathfrak{X}}_{l}$ is mapped onto label $\bm{Y}$ by optimizing
\begin{align}\label{eq_mlp}
	\underset{\bm{\mathcal{W}}_{l}}{\mathrm{argmin}}\quad \norm{\bm{Y} - \bm{\mathfrak{X}}_{l}\bm{\mathcal{W}}_{l}}_{2}^{2} + {\gamma}\norm{\bm{\mathcal{W}}_{l}}_{2}^{2}
\end{align}
where the solution is 
\begin{align}\label{eq_weight_mlp}
	\bm{\mathcal{\hat W}}_{l} = (\bm{\mathfrak{X}}_{l}^{T}\bm{\mathfrak{X}}_{l} + \gamma\bm{I})^{-1}\bm{\mathfrak{X}}_{l}^{T}\bm{Y}.
\end{align}
For $\gamma\to 0$, we have
\begin{align}\label{eq_weight_mlp_MP}
	\bm{\mathcal{\hat W}}_{l} = \bm{\mathfrak{X}}_{l}^{\dagger}\bm{Y}.
\end{align}

For an $L$-layer CNN, the weights of CNN layers or MLP layers can be obtained analytically layer-by-layer in a stacking fashion. Owing to the closed-form solutions, the training starts from the first layer and ends at the last without visiting the dataset more than once. That is, the ACnnL trains CNNs with merely one epoch, consistent with the requirements met by existing analytic learning methods \cite{interpretable-cnn2019,cpnet2021,pil-first1995exact}. We summarize the ACnnL in Algorithm \ref{algo_An_CNN}.

\begin{algorithm}
	\SetAlgoLined
	\textbf{Inputs:} {\small input samples $\{\bm{{X}}_{1}^{(1)},	\bm{{X}}_{1}^{(2)},\dots,	\bm{{X}}_{1}^{(N)}\}$ from the training set, label matrix $\bm{Y}$, network length $L$ (including one MLP as last layer), activation functions $g_1,\dots,g_{L-1}$, label encoders $\bm{Q}_{1},\dots,\bm{Q}_{L-1}$}, and regularization factor $\gamma$.\\
	\For{$l \leftarrow 1$ \KwTo $L$}{\small
		\eIf{$l<L$ (CNN layers)}{
			\nl\textbf{LE process}:  obtain encoded label $\bm{\bar Z}_{l}$ with \eqref{eq_encoder};\\
			\nl\textbf{Feature flatten}: if needed, obtain 2-D feature matrix $\bm{\mathcal{X}}_{l}^{(n)}$ ($n=1,\dots,N$) through \eqref{eq_feature_flatten}-\eqref{eq_feature_flatten3};\\
			\nl\textbf{LS solution}: obtain CNN weight estimation $\bm{\mathcal{\hat W}}_{l}$ with \eqref{eq_acnn};\\
			\nl Calculate the activation $\bm{\mathcal{X}}_{l+1}^{(n)}$ ($n=1,\dots,N$) using \eqref{eq_activation} (if needed, calculate $\bm{X}_{l+1}^{(n)}$ using \eqref{eq_activation2tradition});\\
		}
		{
			\nl	obtain flattened feature matrix $\bm{\mathfrak{X}}_{l}$ using \eqref{eq_flatten_mlp};\\
			\nl calculate MLP weight estimation $\bm{\mathcal{\hat W}}_{l}$ using \eqref{eq_weight_mlp};
		}
	}
	\textbf{Outputs:} $\bm{\hat \mathcal{W}}_{1}, \dots, \bm{\hat \mathcal{W}}_{L}$
	\caption{ACnnL}
	\label{algo_An_CNN}
\end{algorithm}

\section{Analysis}
In this section, a theoretical analysis of the proposed ACnnL is given. The representation capacity of the proposed ACnnL is discussed. We shall provide evidence regarding ACnnL's potential to perfectly fit the training samples given sufficient parameters. More importantly, we show, from the structural point of view, that the CNN structure can be treated as a generalized MLP, but differs in that the CNN is more heavily regularized and therefore has better generalization ability.

\subsection{Discussion of Representation Capability}
Given sufficient parameters, the representation capacity can be shown by proving that $\bm{\mathcal{\hat W}}_{l}$ obtained using \eqref{eq_acnn_MP} can perfectly fit the set of training samples $\{\{\bm{\mathcal{X}}_{l}^{(1)},\bm{\bar Z}_{l}^{(1)}\},\{\bm{\mathcal{X}}_{l}^{(2)},\bm{\bar Z}_{l}^{(2)}\},\dots,\{\bm{\mathcal{X}}_{l}^{(N)},\bm{\bar Z}_{l}^{(N)}\}\}$. To this end, we first rewrite \eqref{eq_acnn_MP} as
\begin{align}\label{eq_acnn_rewrite}
	\bm{\mathcal{\hat W}}_{l} = \bm{\mathfrak{X}}^{\dagger} \bm{\mathfrak{Z}}
\end{align}
where 
\begin{align}\label{eq_cat_datamatrix}
	\bm{\mathfrak{X}} = \begin{bmatrix}
		\bm{\mathcal{X}}_{l}^{(1)}\\
		\bm{\mathcal{X}}_{l}^{(2)}\\
		\vdots\\
		\bm{\mathcal{X}}_{l}^{(N)}
	\end{bmatrix},
	\bm{\mathfrak{Z}} = \begin{bmatrix}
		\bm{\bar Z}_{l}^{(1)}\\
		\bm{\bar Z}_{l}^{(2)}\\
		\vdots\\
		\bm{\bar Z}_{l}^{(N)}
	\end{bmatrix}
\end{align}
in which $\bm{\mathfrak{X}}\in\mathbb{R}^{N(W_{l}-K_{l}+1)(H_{l}-K_{l}+1)\times C_{l}K_{l}^{2}}$. Subsequently, we can summarize the representation capacity in the following theorem.

\begin{thm}
	At layer $l$, If $\bm{\mathfrak{X}}$ has full row rank, the convolution layer can perfectly fit the learning samples, i.e., $\bm{\mathcal{\hat W}}_{l}$ obtained from \eqref{eq_acnn_MP} allows the objective function (with $\gamma=0$) in \eqref{eq_multiple} to become 0.
\end{thm}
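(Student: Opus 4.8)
The plan is to collapse the per-sample sum in \eqref{eq_multiple} into a single stacked least-squares residual and then invoke the defining property of the Moore--Penrose inverse of a full-row-rank matrix. First I would observe that, with $\gamma=0$, the objective $\sum_{n=1}^{N}\norm{\bm{\bar Z}^{(n)} - \bm{\mathcal{X}}_{l}^{(n)}\bm{\mathcal{W}}_{l}}_{2}^{2}$ is nothing but the squared Frobenius norm $\norm{\bm{\mathfrak{Z}} - \bm{\mathfrak{X}}\bm{\mathcal{W}}_{l}}_{2}^{2}$ of the stacked residual, where $\bm{\mathfrak{X}}$ and $\bm{\mathfrak{Z}}$ are the vertical concatenations defined in \eqref{eq_cat_datamatrix}: stacking the individual residual blocks $\bm{\bar Z}^{(n)} - \bm{\mathcal{X}}_{l}^{(n)}\bm{\mathcal{W}}_{l}$ on top of one another and summing their squared norms reproduces the squared norm of the whole stack. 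The claim is therefore equivalent to showing that substituting $\bm{\mathcal{\hat W}}_{l}$ makes this single residual vanish.

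Next I would use the rewriting \eqref{eq_acnn_rewrite}, $\bm{\mathcal{\hat W}}_{l} = \bm{\mathfrak{X}}^{\dagger}\bm{\mathfrak{Z}}$, which is consistent with \eqref{eq_acnn_MP} through the standard identity $(\bm{\mathfrak{X}}^{T}\bm{\mathfrak{X}})^{\dagger}\bm{\mathfrak{X}}^{T} = \bm{\mathfrak{X}}^{\dagger}$ together with $\sum_{n}\bm{\mathcal{X}}_{l}^{(n)T}\bm{\mathcal{X}}_{l}^{(n)} = \bm{\mathfrak{X}}^{T}\bm{\mathfrak{X}}$ and $\sum_{n}\bm{\mathcal{X}}_{l}^{(n)T}\bm{\bar Z}_{l}^{(n)} = \bm{\mathfrak{X}}^{T}\bm{\mathfrak{Z}}$. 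Plugging this in, the residual becomes $\bm{\mathfrak{Z}} - \bm{\mathfrak{X}}\bm{\mathfrak{X}}^{\dagger}\bm{\mathfrak{Z}} = (\bm{I} - \bm{\mathfrak{X}}\bm{\mathfrak{X}}^{\dagger})\bm{\mathfrak{Z}}$, so the entire problem reduces to establishing $\bm{\mathfrak{X}}\bm{\mathfrak{X}}^{\dagger} = \bm{I}$.

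The final step invokes the full-row-rank hypothesis. Since $\bm{\mathfrak{X}}\in\mathbb{R}^{N(W_{l}-K_{l}+1)(H_{l}-K_{l}+1)\times C_{l}K_{l}^{2}}$ has full row rank (which forces the number of columns to be at least the number of rows --- precisely the regime of sufficient parameters emphasized in the surrounding discussion), the Gram matrix $\bm{\mathfrak{X}}\bm{\mathfrak{X}}^{T}$ is invertible and the pseudoinverse takes the explicit right-inverse form $\bm{\mathfrak{X}}^{\dagger} = \bm{\mathfrak{X}}^{T}(\bm{\mathfrak{X}}\bm{\mathfrak{X}}^{T})^{-1}$. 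Hence $\bm{\mathfrak{X}}\bm{\mathfrak{X}}^{\dagger} = \bm{\mathfrak{X}}\bm{\mathfrak{X}}^{T}(\bm{\mathfrak{X}}\bm{\mathfrak{X}}^{T})^{-1} = \bm{I}$, the residual $(\bm{I} - \bm{\mathfrak{X}}\bm{\mathfrak{X}}^{\dagger})\bm{\mathfrak{Z}}$ is the zero matrix, and the objective equals $0$.

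This is a short argument, so there is no deep obstacle; the one point requiring care is the side on which the pseudoinverse identity acts. For a wide, full-row-rank matrix one has $\bm{\mathfrak{X}}\bm{\mathfrak{X}}^{\dagger} = \bm{I}$, whereas in general $\bm{\mathfrak{X}}^{\dagger}\bm{\mathfrak{X}} \neq \bm{I}$, so I would be careful to multiply on the correct side, and I would state explicitly that it is exactly the full-row-rank assumption that guarantees invertibility of $\bm{\mathfrak{X}}\bm{\mathfrak{X}}^{T}$ and thus the perfect-fit conclusion.
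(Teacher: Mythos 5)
Your proposal is correct and takes essentially the same route as the paper's own proof: stack the per-sample residuals so the objective becomes $\norm{(\bm{I}-\bm{\mathfrak{X}}\bm{\mathfrak{X}}^{\dagger})\bm{\mathfrak{Z}}}$ using the rewriting $\bm{\mathcal{\hat W}}_{l}=\bm{\mathfrak{X}}^{\dagger}\bm{\mathfrak{Z}}$ in \eqref{eq_acnn_rewrite}, then conclude from full row rank that $\bm{\mathfrak{X}}\bm{\mathfrak{X}}^{\dagger}=\bm{I}$. Your added justifications --- the identity $(\bm{\mathfrak{X}}^{T}\bm{\mathfrak{X}})^{\dagger}\bm{\mathfrak{X}}^{T}=\bm{\mathfrak{X}}^{\dagger}$ linking \eqref{eq_acnn_MP} to \eqref{eq_acnn_rewrite}, and the explicit right-inverse formula $\bm{\mathfrak{X}}^{\dagger}=\bm{\mathfrak{X}}^{T}(\bm{\mathfrak{X}}\bm{\mathfrak{X}}^{T})^{-1}$ --- simply make precise steps the paper states without elaboration.
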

\begin{proof}
	Having known that the solution in \eqref{eq_acnn_MP} can be equivalently rewritten by \eqref{eq_acnn_rewrite}, after the training, the objective function in \eqref{eq_multiple} can be replaced by
	\begin{align}\nonumber
		\norm{\bm{\mathfrak{Z}} - \bm{\mathfrak{X}}\bm{\mathcal{\hat W}}_{l}} &= \norm{\bm{\mathfrak{Z}} - \bm{\mathfrak{X}}\bm{\mathfrak{X}}^{\dagger}\bm{\mathfrak{Z}}}\\\label{eq_error}
		&= \norm{ (\bm{I}-\bm{\mathfrak{X}}\bm{\mathfrak{X}}^{\dagger})\bm{\mathfrak{Z}}}.
	\end{align}
	If $\bm{\mathfrak{X}}$ has full row rank, naturally we have $\bm{\mathfrak{X}}\bm{\mathfrak{X}}^{\dagger}=\bm{I}$. This allows a minimum of 0 for the objective function in \eqref{eq_error}.
	
\end{proof}

Clearly, to generate a matrix with full row rank, a necessary condition is that $\bm{\mathfrak{X}}$ must be a ``fat'' matrix by choosing appropriate channel size $C_{l}$ and kernel size $K_{l}$ such that 
\begin{align}\label{eq_full_rank_necessary}
	C_{l}K_{l}^{2}\ge N(W_{l}-K_{l}+1)(H_{l}-K_{l}+1).
\end{align}
That is, sufficiently large $C_{l}$ and $K_{l}$ could allow $\bm{\mathfrak{X}}$ to be of full rank, thereby leading to a perfect fitting of training samples. From a linear regression point of view, a larger $C_{l}$ or $K_{l}$ could potentially increase the fitting performance as the rank of $\bm{\mathfrak{X}}$ could increase even when \eqref{eq_full_rank_necessary} is not yet satisfied.

\subsection{Interpretation of CNN Structure through Regularization}\label{subsection_regularization}
As shown in \eqref{eq_acnn_MP}, the training of CNN weights boils down to solving a linear matrix equation by inverting the data matrix $\bm{\mathfrak{X}}\in\mathbb{R}^{N(W_{l}-K_{l}+1)(H_{l}-K_{l}+1)\times C_{l}K_{l}^{2}}$ given a total of $N$ samples. This closely resembles the MLP training measure (e.g., see \eqref{eq_weight_mlp_MP} or \cite{pil-first1995exact,cpnet2021}). Such resemblance is natural given the fact that the analytic learning can only be achieved by somehow converting the network learning problem into a linear one.

There are certainly distinctions between the proposed ACnnL and its MLP counterpart. The most representative one can be described by the extent to which the two training methods are regularized. This can be elaborated from a linear regression point of view as follows. 

Recall that one data sample becomes a matrix after transformation, e.g., $\bm{\mathcal{X}}_{l}\in\mathbb{R}^{C_{l}K_{l}^{2}\times(W_{l}-K_{l}+1)(H_{l}-K_{l}+1)}$. This differentiates it from the existing MLP-based analytic learning methods (e.g., see \eqref{eq_weight_mlp_MP} and \cite{pil-first1995exact,cpnet2021}) where one sample is presented by one row vector only. We take the MLP weight learning in \eqref{eq_mlp} as an example where such a solution can be interpreted by solving a regression problem constrained by $N$ linear conditions. The ACnnL trains CNN layers by solving \eqref{eq_multiple} yielding a solution given by \eqref{eq_acnn_rewrite}. In other words, the learning is through solving a linear problem that is constrained by a total of $N(W_{l}-K_{l}+1)(H_{l}-K_{l}+1)$ linear equations (i.e., $\bm{\mathfrak{X}}$ has $N(W_{l}-K_{l}+1)(H_{l}-K_{l}+1)$ rows as shown in \eqref{eq_cat_datamatrix}). That is, the number of linear constraints has increased by $(W_{l}-K_{l}+1)(H_{l}-K_{l}+1)$ times. For small kernel size $K_{l}$ in a large image (i.e., large $W_{l}$ and $H_{l}$), the increase can be tremendous. Hence, we may interpret the ACnnL as a more heavily regularized version than its MLP counterpart. This is why CNNs usually generalize better than MLPs. 

Conclusion made in \cite{zhang2016understanding} claims that the network structure (e.g., the CNN) is a form of implicit regularization, with no further theoretical findings provided. However, such a claim can be supported by our analysis here. Rather encouragingly, we provide evidence from a linear regression angle to validate that the structure is indeed a form of regularization. Specifically, the CNN is a form of regularized MLP by introducing more localized linear constraints.

Although the above explanation of generalization is facilitated by comparing different analytic learning methods, it should, to a reasonable extent, apply universally. As shown in \eqref{eq_cnn_matirx_multiplication}, the CNN layer can be converted into a linear matrix multiplication form. Hence, it can be treated as a generalized version of MLP with the sole difference being the number of feature vectors/constraints (e.g., the number of rows in $\bm{\mathcal{X}}_{l}$). Such a structural resemblance does not change whether or not the proposed learning scheme ACnnL is in play.
%
%The ACnnL-trained networks follow the pattern of better performance for deeper structures. We can interpret this as follows. The label encoding process linearly projects the label space into another space. This projected space usually has higher dimension but contains equivalent amount of information as the label space. That is, mapping the features to such a space pulls the learning towards the label space, hence the label itself. The activation that follows assign new nonlinearity to the feature space for extraction in the next mapping.

\section{Experiments}
To evaluate the performance of the proposed training strategy, we train CNNs to conduct classification tasks on several datasets, including MNIST, FashionMNIST, CIFAR-10 and CIFAR-100. The performance of ACnnL is assessed in comparison with that obtained by BP and by other non-BP methods. The assessment mainly includes the generalization performance and the time consumed to complete the training. The detailed comparisons are conducted mainly between BP and ACnnL as other methods cannot train CNNs.

\noindent\textbf{Datasets.} The MNIST and FashionMNIST have 10 classes with 50,000 images of $28\times28$ gray pixels for training and 10,000 for testing. The CIFAR-10 and CIFAR-100 datasets include $32\times32$ color images, with 50,000 images for training and 10,000 for testing. The CIFAR-10 and CIFAR-100 have 10 classes and 100 classes respectively.

The conducted experiments aim to validate the proposed ACnnL as the first analytic learning tool for training CNNs. To avoid complicating the validation, here we only train vanilla CNN structures (LeNet or VGG-like structures shown in next subsection) without involving advanced modules such as batch normalization \cite{batchnorm2015} or dropout \cite{dropout2014}. For the BP algorithm, we adopt 3 optimizers as follows:
\begin{itemize}
	\item Vanilla SGD, i.e., \textbf{BP (VSGD)}, without momentum or weight decay.
	\item SGD, i.e., \textbf{BP (SGD)}, with a momentum of 0.9 and weight decay of $5\times10^{-4}$.
	\item Adam, i.e., \textbf{BP (Adam)}, a modified optimizer proposed in \cite{adam2015ICLRposter}.
\end{itemize}

The network is trained for 100 epochs. The learning rate begins at 0.001 and is divided by 10 at 30, 60 and 80 epochs. During the experiments, no data augmentation techniques are applied  for BP and ACnnL. The generalization performance is measured by the testing accuracy, i.e., the accuracy on the testing set, after the training process. For iteration-based methods (e.g., BP), the testing accuracy is reported at the last epoch. No validation set is adopted. All experiments of BP and ACnnL are reported by the average of 3 runs on a workstation with Intel Xeon W-3265 Processor with 256G RAM and 2080 Ti GPU with 11G RAM.

\subsection{Vanilla Convolutional Neural Network Structure}
We experiment on a 5-layer\footnote{For convenience, the layer count here only refers to a total of trainable layers (e.g., CNN and MLP) excluding other types of layers such as pooling.} vanilla CNN containing 4 CNN layers followed by one MLP.  For convenience, the structure is represented by {\{\bf [Conv($K_{1}\times K_{1}$)$\times C_{1}$ -- $g_{1}$] --- [AvgPool($P\times P$)] --- [Conv($K_{2}\times K_{2}$)$\times C_{2}$ -- $g_{2}$] -- [AvgPool($P\times P$)] --- [Conv($K_{3}\times K_{3}$)$\times C_{3}$ -- $g_{3}$] --- [Conv($K_{4}\times K_{4}$)$\times C_{4}$ -- $g_{4}$] --- [MLP($K$)]\}}. Specifically, {\bf [Conv($K_{l}\times K_{l}$)$\times C_{l}$ -- $g_{l}$]} indicates a $K_{l}\times K_{l}$ CNN layer with $C_{l}$ channels followed by an activation function $g_{l}$. {\bf [AvgPool($P\times P$)]} represents the average pooling of size $P\times P$ with stride $P$ for down-sampling the feature map. {\bf MLP($K$)} means the MLP layer mapping the current feature to a vector of $K$ dimensions. For the classification problems in this experiment, $K$ is the number of classes.

We use a consistent set of structural parameters for various datasets. Specifically, we set $K_{1}=5$, $K_{2},K_{3},K_{4}=3$, $C_{2},C_{3}, C_{4}=2C_{1}, 4 C_{1}, 4C_{1}$ and $P=2$ respectively. The activation functions are chosen to be LeakyReLU with slope $0.1$ for negative activations. We name this unified structure ``CNN-5(C)'', representing a 5-layer vanilla CNN with a tunable channel parameter $C=C_{1}$. For instance, a ``CNN-5(C=16)'' structure for MNIST indicates a CNN with a structure of {\{\bf [Conv($5\times 5$)$\times 16$ -- LeakyReLU] --- [AvgPool($2\times 2$)] --- [Conv($3\times 3$)$\times 32$ -- LeakyReLU] -- [AvgPool($2\times 2$)] --- [Conv($3\times 3$)$\times 64$ -- LeakyReLU] --- [Conv($3\times 3$)$\times 64$ -- LeakyReLU] --- [MLP($10$)]\}}.

\subsection{Generalization Performance Evaluation and Comparison}

\subsubsection{ACnnL's generalization w.r.t. the selection of $\gamma$}
The solution in ACnnL is constructed based on the LS technique. The regularization factor $\gamma$ could play an important role in determining the generalization ability. Hence, we explore the selection of $\gamma$ with a range from $10^{-5}$ to $10^{5}$ to evaluate its impact. As shown in TABLE \ref{table_gamma}, the ACnnL with a regularization factor $\gamma$ of a value around $10^{2}$ gives satisfying generalization performance. A too large or too small $\gamma$ leads to deteriorated generalization. For convenience, we adopt $\gamma=10^{2}$ in the following experiments.

\begin{table}
	\centering
	\caption{Testing accuracy w.r.t. $\gamma$ settings.}
	\resizebox{1\linewidth}{!}{\begin{tabular}{c|lllllllll}
			\toprule
			\hline
			&\multicolumn{7}{c}{CNN-5(C=128), $\gamma=?$}\\
			\hline
			& $10^{-5}$& $10^{-3}$& $10^{-2}$& $10^{-1}$& $1$& $10$& $10^{2}$& $10^{3}$& $10^{5}$\\
			\hline
			MNIST&0.86188	&0.79274	&0.89666	&0.94736	&0.90952	&0.94216	&0.99024	&\textbf{0.99032}	&0.92568\\
			FashionMNIST&0.57248	&0.56752	&0.60428	&0.55812	&0.67224	&0.69866	&\textbf{0.89718}	&0.89122	&0.71300\\
			CIFAR-10&0.21044	&0.2864	&0.26074	&0.27988	&0.28672	&0.2044	&\textbf{0.63984}	&0.5958	&0.21228\\
			CIFAR-100&0.1063	&0.0739	&0.0137	&0.0851	&0.0808	&0.0089	&\textbf{0.3717}	&0.2518	&0.01\\
			\hline
			\bottomrule
	\end{tabular}}
	\label{table_gamma}
\end{table}

\subsubsection{Overall Generalization Comparison}
Here we give an overall evaluation of ACnnL's generalization in comparison with that of BP as well as the existing non-BP methods. The non-BP methods included for comparison are MLP-based analytic learning techniques (i.e., KARnet \cite{karnet2018}, PILAE \cite{pilae2018s}, ANnet \cite{annet2018} and CPNet \cite{cpnet2021}) and DF \cite{deepforest2017}.

As reported in TABLE \ref{table_main_gen_comparison}, the ACnnL gives better accuracies than DF's on MNIST (e.g., 0.9931 v.s. 0.9926) and CIFAR-10 (e.g., 0.7049 v.s. 0.6337). Among the compared analytic learning methods, the proposed ACnnL significantly outperforms its MLP counterparts in terms of generalization. The performances deviate more evidently on relatively difficult datasets (e.g., CIFAR-10) than easy ones (e.g., MNIST). For instance, on MNIST the ACnnL gains a $1\%$ lead over the ANnet while the lead becomes $20\%$ on CIFAR-10 (e.g., ACnnL of $0.7049$ v.s. ANnet of $0.4990$). This shows that the ACnnL is an effective CNN trainer with analytical solutions. It also partly demonstrates that the CNN structure is in general more powerful than the MLP for dealing with image-based classification problems.

It is observed that MLP-based analytic learning methods suffer much from over-fitting more severely than the proposed ACnnL. For instance, on CIFAR-10 the ANnet obtains a training accuracy of $0.7599$ but receives a testing accuracy of only $0.4990$. The proposed ACnnL achieves a training accuracy of $0.8019$ with a rather comparable prediction accuracy of $0.6706$. This also validates our claim in Section \ref{subsection_regularization}. That is, the ACnnL trains neural networks by imposing significantly more constraints than those in MLP-based methods, thereby heavily regularizing the optimization for enhancing network generalization.

On the other hand, as an iteration-free training strategy, it is of obligation to investigate the prediction performance deviation from the commonly used BP. As indicated in TABLE \ref{table_main_gen_comparison}, CNNs trained by BP (Adam) give better generalization than those trained by the proposed ACnnL. This is reasonable as the ACnnL forgoes the global feedback information and focuses on the local one in order to achieve analytic learning. The lower layers in the CNN trained by the ACnnL do not receive feedbacks from the upper layers, leading to certain under-fitting. However, it is encouraging to see that the ACnnL outperforms the BP (VSGD) and BP (SGD), e.g., ACnnL with 0.6706 v.s. BP (VSGD) with 0.2358 and BP (SGD) with 0.5818 on CIFAR-10. This demonstrates that the ACnnL can simply bring out a network's generalization power with a very limited tuning of hyperparameters (e.g., $\gamma$). The BP needs to carefully select an appropriate optimizer (e.g., Adam in this case) before the network's generalization fully appears.

\begin{table}[h]
	\centering
	\caption{Generalization performance of networks trained by various compared methods. Methods marked by $^{*}$ report their best results from their papers with diverse structures.}
	\resizebox{1\linewidth}{!}{
		\begin{tabular}{c|c|c|c|c}
			\toprule
			\hline
			Dataset&  Structure&Training Method &Training Acc.  &  Test. Acc. \\
			\hline
			MNIST& MLP&KARnet$^{*}$ \cite{karnet2018} &- &0.9244\\
			MNIST& MLP&ANnet$^{*}$ \cite{annet2018}&0.9993&0.9824  \\
			MNIST& MLP&PILAE$^{*}$  \cite{pilae2018s}&- &0.9692\\
			MNIST& MLP&RBM-GI$^{*}$ \cite{analytic-learning-finallayer2019noniterative} &- &0.9758\\
			MNIST& MLP&CPNet$^{*}$ \cite{cpnet2021}&- &0.9850\\
			MNIST& DF&DF$^{*}$ \cite{deepforest2017}&-&0.9926\\
			MNIST& CNN-5(C=256)&BP (VSGD)\cite{bp1974phd} & 1.00& 0.8991 \\
			MNIST& CNN-5(C=512)&BP (VSGD)\cite{bp1974phd} & 1.00& 0.9147 \\
			MNIST& CNN-5(C=256)&BP (SGD)\cite{bp1974phd} & 1.00& 0.9832 \\
			MNIST& CNN-5(C=512)&BP (SGD)\cite{bp1974phd} & 1.00& 0.9801 \\
			MNIST& CNN-5(C=256)&BP (Adam)\cite{bp1974phd} & 1.00& 0.9941 \\
			MNIST& CNN-5(C=512)&BP (Adam)\cite{bp1974phd} & 1.00& \textbf{0.9943} \\
			\rowcolor{gray!30}
			MNIST& CNN-5(C=256)&ACnnL& 0.9945&0.9920\\
			\rowcolor{gray!30}
			MNIST& CNN-5(C=512)&ACnnL& 0.9989&0.9931\\
			\hline
			CIFAR-10& MLP&ANnet$^{*}$ \cite{annet2018}&0.7599  &  0.4990\\
			CIFAR-10& DF&DF$^{*}$ \cite{deepforest2017}&-&0.6337\\
			CIFAR-10& CNN-5(C=256)&BP (VSGD) \cite{bp1974phd} & 0.5806& 0.2358 \\
			CIFAR-10& CNN-5(C=256)&BP (SGD) \cite{bp1974phd} & 0.8088& 0.5818 \\
			CIFAR-10& CNN-5(C=256)&BP (Adam) \cite{bp1974phd} & 1.00& \textbf{0.7309} \\
			\rowcolor{gray!30}
			CIFAR-10& CNN-5(C=256)&ACnnL& 0.8019&0.6706 \\
			\rowcolor{gray!30}
			CIFAR-10& CNN-5(C=512)&ACnnL& 0.9077&0.7049 \\
			\hline
			\bottomrule
	\end{tabular}}
	
	\label{table_main_gen_comparison}
\end{table}

\subsubsection{ACnnL performance w.r.t. network width}
We also evaluate the generalization performance of our ACnnL w.r.t. network width. This is done by training CNN-5(C) with various $C$ values ranging from 16 to 512. As shown in TABLE \ref{table_width}, the network's generalization performance increases with a wider network structure. Such a performance gain is not surprising as the analytic learning is less likely to experience over-fitting when increasing the number of parameters \cite{cpnet2021}. Instead, the network desires more parameters to enhance its generalization. This is also consistent with the trend of traditional deep learning \cite{zagoruyko2016wide}.

\begin{table}
	\centering
	\caption{Testing accuracy w.r.t. network width.}
	\resizebox{1\linewidth}{!}{\begin{tabular}{c|cccccc}
			\toprule
			\hline
			&\multicolumn{6}{c}{CNN-5(C=?)}\\
			\hline
			& C=16& C=32& C=64& C=128& C=256& C=512\\
			\hline
			MNIST&0.9739&0.9838
			&0.9878
			&0.9903&0.9920&\textbf{0.9931}\\
			FashionMNIST&0.8466
			&0.8708&0.8844&0.8980&0.9100&\textbf{0.9155}\\
			CIFAR-10&0.4589
			&0.5272
			&0.5809&0.6345&0.6706&\textbf{0.7049}\\
			CIFAR-100&0.1144 &0.1783&0.2532&0.3338&0.4041&\textbf{0.4628}\\
			\hline
			\bottomrule
	\end{tabular}}
	\label{table_width}
\end{table}

\subsubsection{ACnnL performance w.r.t. network depth}
In particular, we evaluate the impact of a network's depth on the generalization performance. In TABLE \ref{table_depth}, the evolution of generalization performance w.r.t. network depth is reported by training CNNs of various depths on MNIST and CIFAR-10. A clear pattern of an increasing performance for a deeper structure is observed in the reported numbers. This is optimistic as our analytic method preserves the most important feature of current deep learning. That is, features extracted by deeper CNNs are usually more discriminative than those obtained by shallow ones.

\begin{table*}
	\centering
	\caption{Evaluation of the proposed ACnnL in terms of network depth.}
	\resizebox{1\textwidth}{!}{
		\begin{tabular}{c|cl|l}
			\toprule
			\hline
			Dataset& \multicolumn{2} {c|} {\bfseries Structure}&Test. Acc.\\
			\hline
			MNIST&CNN-2(C=128)&{\{\bf [Conv($5\times 5$)$\times 16$ -- LeakyReLU] --- [MLP($10$)]\}}.&0.8949\\
			MNIST&CNN-3(C=128)&{\{\bf [Conv($5\times 5$)$\times 16$ -- LeakyReLU] --- [AvgPool($2\times 2$)] --- [Conv($3\times 3$)$\times 32$ -- LeakyReLU] --- [MLP($10$)]\}}&0.9563 $\uparrow$\\
			MNIST&CNN-4(C=128)&{\{\bf [Conv($5\times 5$)$\times 16$ -- LeakyReLU] --- [AvgPool($2\times 2$)] --- [Conv($3\times 3$)$\times 32$ -- LeakyReLU] --- [AvgPool($2\times 2$)] --- [Conv($3\times 3$)$\times 64$ -- LeakyReLU]  --- [MLP($10$)]\}}.&0.9881 $\uparrow$\\
			MNIST&CNN-5(C=128)&{\{\bf [Conv($5\times 5$)$\times 16$ -- LeakyReLU] --- [AvgPool($2\times 2$)] --- [Conv($3\times 3$)$\times 32$ -- LeakyReLU] --- [AvgPool($2\times 2$)] --- [Conv($3\times 3$)$\times 64$ -- LeakyReLU] --- [Conv($3\times 3$)$\times 64$ -- LeakyReLU] --- [MLP($10$)]\}}.&0.9900 $\uparrow$\\
			\hline
			CIFAR-10&CNN-2(C=128)&{\{\bf [Conv($5\times 5$)$\times 16$ -- LeakyReLU] --- [MLP($10$)]\}}&0.4953\\
			CIFAR-10&CNN-3(C=128)&{\{\bf [Conv($5\times 5$)$\times 16$ -- LeakyReLU] --- [AvgPool($2\times 2$)] --- [Conv($3\times 3$)$\times 32$ -- LeakyReLU] --- [MLP($10$)]\}}&0.5353 $\uparrow$\\
			CIFAR-10&CNN-4(C=128)&{\{\bf [Conv($5\times 5$)$\times 16$ -- LeakyReLU] --- [AvgPool($2\times 2$)] --- [Conv($3\times 3$)$\times 32$ -- LeakyReLU] --- [AvgPool($2\times 2$)] --- [Conv($3\times 3$)$\times 64$ -- LeakyReLU]  --- [MLP($10$)]\}}.&0.6046 $\uparrow$\\
			CIFAR-10&CNN-5(C=128)&{\{\bf [Conv($5\times 5$)$\times 16$ -- LeakyReLU] --- [AvgPool($2\times 2$)] --- [Conv($3\times 3$)$\times 32$ -- LeakyReLU] --- [AvgPool($2\times 2$)] --- [Conv($3\times 3$)$\times 64$ -- LeakyReLU] --- [Conv($3\times 3$)$\times 64$ -- LeakyReLU] --- [MLP($10$)]\}}.&0.6277 $\uparrow$\\
			\hline
	\end{tabular}}
	\label{table_depth}
\end{table*}

\begin{figure}
	\centering
	\includegraphics[width=1\linewidth]{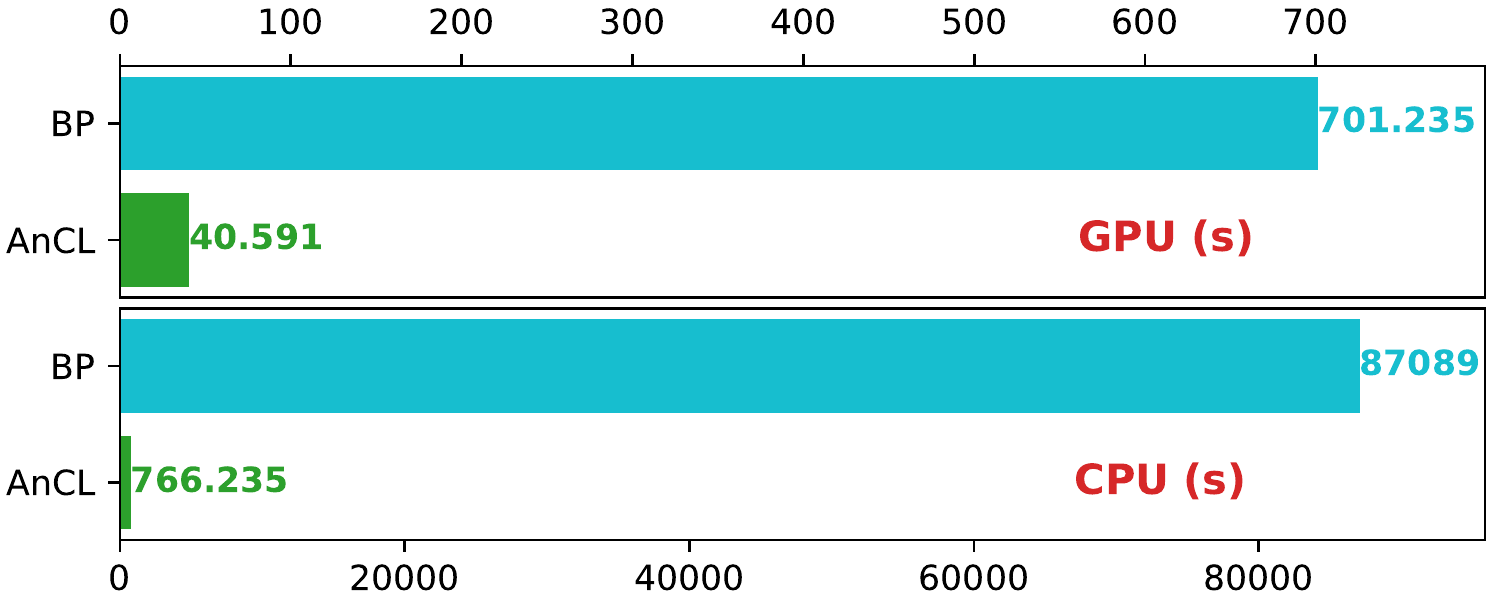}
	\caption{Running time comparison for training CNN-5(C=128).}
	\label{fig:time}
\end{figure}

\subsubsection{Speed evaluation}
The ACnnL trains CNNs with merely a single epoch leading to an extremely fast training speed. To demonstrate this, we measure the running times of BP and ACnnL for training CNN5(C=128) with CPU and GPU settings respectively . As shown in Fig. \ref{fig:time}, training the network with GPU, our ACnnL spends around 40 seconds to complete the training, which is more than \textbf{17}$\times$ faster than that of BP (more than 700 seconds for 100 epochs). This gap of speed grows wider for the CPU setting where the ACnnL runs \textbf{113}$\times$ faster than the BP (ACnnL with 766 seconds v.s. BP with 87089 seconds)! Such a speed growth can be justified by the fact that GPUs cannot accelerate matrix inverse in the same way they accelerate matrix multiplication. In fact, comparing the speed using GPUs is less fair as GPUs are mainly built for BP-based iterative algorithms while matrix inverse is yet to be well supported. From the cost-efficient point of view, the proposed ACnnL obtains a reasonable trade-off between the accuracy (e.g.,  a 6\% drop from 0.73 to 0.67 for CNN-5(C=256)) and the time consumption (e.g., 17$\times$ to 113$\times$ speed increase).

The ACnnL experiences a marginal generalization degradation compared with the BP. However, it is motivating to see that training a network for merely one epoch could achieve most of its generalization potential. Yet, our goal is not to replace the BP-based iteration methods. Instead, the ACnnL is more of an alternative to trade accuracy for speed in a reasonable manner for certain scenarios where computing resources are limited (e.g., no available GPUs) or fast training is required (e.g., in near real-time training applications). The advantage of high training speed and interpretability is very encouraging.

More importantly, our goal does not stop at accuracy-speed trade-off. Instead, it is of great interest to incorporate the ACnnL with the existing BP such that a comparable generalization with that of BP can be obtained with the network trained for much fewer epochs. For instance, one might adopt the ACnnL as an initialization strategy and the BP as a subsequent adjustment (e.g., BP as fine-tuning). However, currently this incorporation is not trivial due to numerical differences. For instance, CNN weights trained with BP are usually designed to be small while the weights trained with ACnnL in the current phase exhibit a relatively random pattern (e.g., some weights could have large values) resulted from the matrix inversion. Connecting the ACnnL and the BP has potential and is to be investigated in future work.

\subsection{Small-sample Performance}\label{subsection_small_samples}
In this subsection, we conduct experiments with small-sample settings. Instead of utilizing all the samples from the datasets, we randomly select $N_{c}$ (with the total samples $KN_{c}\le N$ smaller than the original size) samples from each class and train the classifier from scratch using these selected samples. The small-sample experiment resembles but differs from the recently popular few-shot learning as no ``support set'' \cite{yue2020interventional} is provided to pre-train the network. Therefore, we exclude the existing few-shot learning methods for comparison, and focus the evaluation against the BP to highlight the impact of analytic learning.

We map the evolution of testing accuracy w.r.t. $N_{c}$ on CIFAR-10 and MNIST in Fig. \ref{fig:small-sample}. As shown in the figure, the proposed ACnnL outperforms the BP for small $N_{c}$ values. For instance, results on MNIST in Fig. \ref{fig:small-sample}(a) indicate that networks trained with ACnnL give better generalization for $N_{c}< 1000$ samples. This is consistent with the pattern on CIFAR-10 in Fig. \ref{fig:small-sample}(a). Empirically, it is evident that our ACnnL gives better generalization performance than that of BP (here we adopt Adam optimizer as it gives the best performance among the 3 optimizers) in small-sample setting. Theoretically, this can also be explained by the ``over-fitting-avoiding'' nature of the LS technique that plays a major role in the ACnnL.

The ACnnL having outstanding prediction performance is strategically advantageous to recent deep learning developments focusing on small-sample related tasks (e.g., few-shot learning). This trend is natural as the heart of deep learning has been hinged upon availability of sufficient data. As indicated in the upper panel of Fig. \ref{fig:small-sample}, we can see that networks trained by BP experience a much sharper climb of generalization with $N_{c}>100$ than that with small $N_{c}$ values. For the BP-trained networks, there appear to be a threshold of sample size, below which the generalization would be heavily suppressed. Contrarily, the proposed ACnnL does not share the same weakness because the LS-based solution is less likely to invite over-fitting. This allows our method to become of value with reasonable accuracy in applications for scarce data (e.g., medical images that are expensive).

\begin{figure}
	\centering
	\includegraphics[width=1\linewidth]{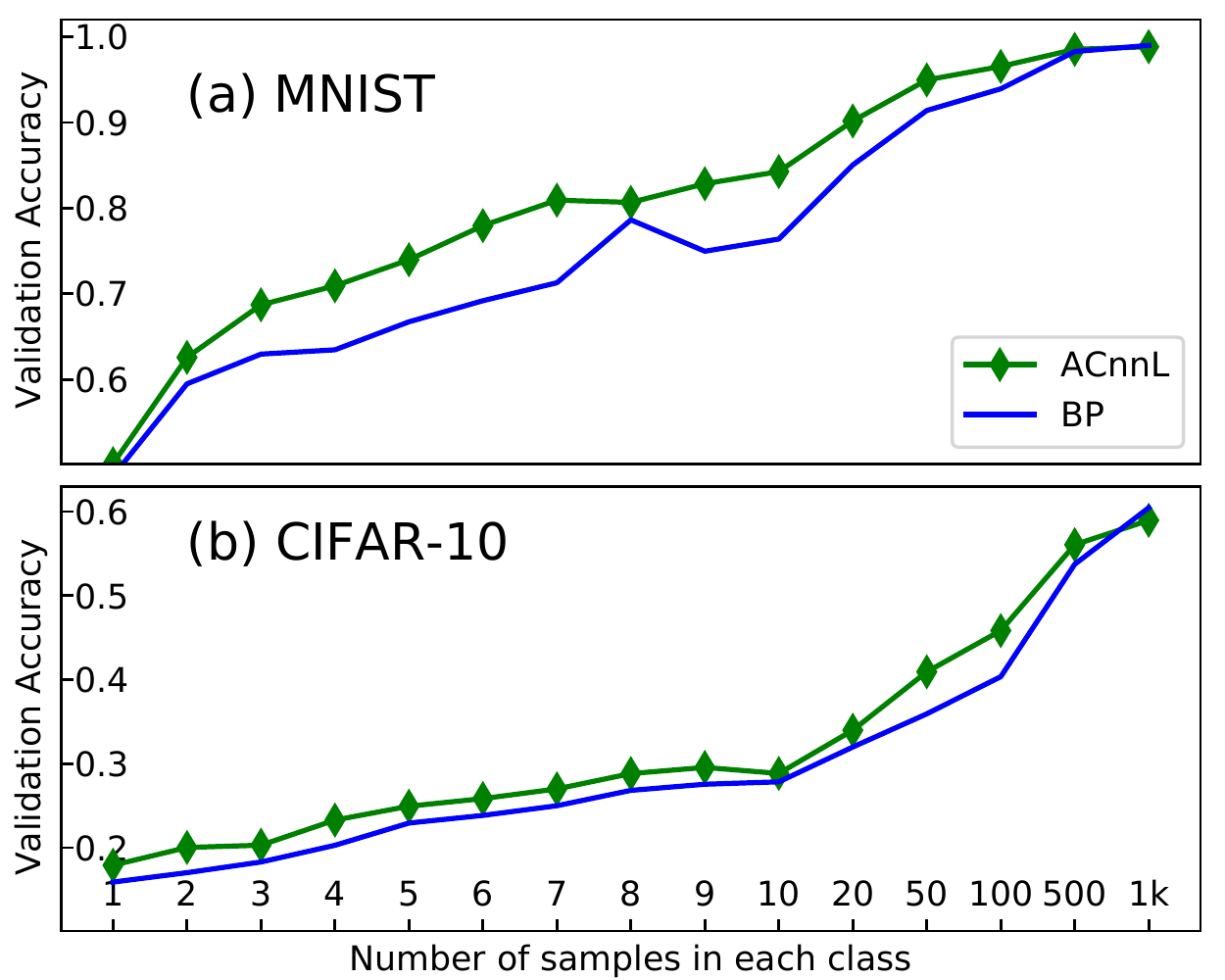}
	\caption{Small-sample accuracy performance of ACnnL in comparison with BP (Adam) on (a) MNIST and (b) CIFAR-10 (``1k'' indicates 1000).}
	\label{fig:small-sample}
\end{figure}

To summarize the empirical evidence from the above experiments, we highlight the insights as follows:
\begin{itemize}
	\item Among the studied analytic learning methods, the ACnnL leads to outstanding generalization performance (e.g., from $50\%$ to $70\%$ on CIFAR-10).
	\item The ACnnL shows consistent patterns with the existing deep learning trend. For instance, the generalization ability increases with a deeper or wider structure.
	\item The ACnnL experiences mild degradation of generalization in comparison with its BP counterpart (e.g., BP (Adam)), but receives a significant boost of training speed due to the one-epoch training style.
	\item The ACnnL is advantageous for training small-sample datasets from scratch. This is empirically and theoretically true as the LS technique establishes a major component.
\end{itemize}

\section{Conclusion}
In this paper, we have proposed an analytic convolutional neural network learning method called ACnnL, the first analytic method for training CNNs. The one-epoch training style of ACnnL completes the training in a significantly fast manner while only experiencing a mild generalization degradation comparing with that trained using the well-known back-propagation. Such a speed-generalization trade-off is consistent with that in the existing analytic learning community. Essentially, with the derivation of ACnnL, we have theoretically uncovered why CNNs generalize better than MLPs in terms of imposition of localized constraints by solving linear matrix functions. The experiments have validated our proposed training scheme, showing a significantly improved generalization over its analytic MLP counterpart. In addition, the ACnnL has been empirically demonstrated to excel in small-sample scenarios, which bears great potential in data-scarce applications.

\section*{Acknowledgments}
This work was supported in part by the Science and Engineering Research Council, Agency of Science, Technology and Research, Singapore, through the National Robotics Program under Grant 1922500054, and in part by the Basic Science Research Program through the National Research Foundation of Korea funded by the Ministry of Education, Science and Technology under Grant (NRF-2021R1A2C1093425).

\bibliographystyle{IEEEtran}
\bibliography{acnn}

% Generated by IEEEtran.bst, version: 1.14 (2015/08/26)
\begin{thebibliography}{10}
\providecommand{\url}[1]{#1}
\csname url@samestyle\endcsname
\providecommand{\newblock}{\relax}
\providecommand{\bibinfo}[2]{#2}
\providecommand{\BIBentrySTDinterwordspacing}{\spaceskip=0pt\relax}
\providecommand{\BIBentryALTinterwordstretchfactor}{4}
\providecommand{\BIBentryALTinterwordspacing}{\spaceskip=\fontdimen2\font plus
\BIBentryALTinterwordstretchfactor\fontdimen3\font minus
  \fontdimen4\font\relax}
\providecommand{\BIBforeignlanguage}[2]{{%
\expandafter\ifx\csname l@#1\endcsname\relax
\typeout{** WARNING: IEEEtran.bst: No hyphenation pattern has been}%
\typeout{** loaded for the language `#1'. Using the pattern for}%
\typeout{** the default language instead.}%
\else
\language=\csname l@#1\endcsname
\fi
#2}}
\providecommand{\BIBdecl}{\relax}
\BIBdecl

\bibitem{imagenet2012}
A.~Krizhevsky, I.~Sutskever, and G.~E. Hinton, ``Imagenet classification with
  deep convolutional neural networks,'' \emph{Advances in neural information
  processing systems}, vol.~25, pp. 1097--1105, 2012.

\bibitem{rcnn2014}
R.~Girshick, J.~Donahue, T.~Darrell, and J.~Malik, ``Rich feature hierarchies
  for accurate object detection and semantic segmentation,'' in
  \emph{Proceedings of the IEEE conference on computer vision and pattern
  recognition}, 2014, pp. 580--587.

\bibitem{detection2019TPAMI}
Q.~Hou, M.-M. Cheng, X.~Hu, A.~Borji, Z.~Tu, and P.~H.~S. Torr, ``Deeply
  supervised salient object detection with short connections,'' \emph{IEEE
  Transactions on Pattern Analysis and Machine Intelligence}, vol.~41, no.~4,
  pp. 815--828, 2019.

\bibitem{segmentation2020TPAMI}
J.~C. Nascimento and G.~Carneiro, ``One shot segmentation: Unifying rigid
  detection and non-rigid segmentation using elastic regularization,''
  \emph{IEEE Transactions on Pattern Analysis and Machine Intelligence},
  vol.~42, no.~12, pp. 3054--3070, 2020.

\bibitem{unet2015}
O.~Ronneberger, P.~Fischer, and T.~Brox, ``U-net: Convolutional networks for
  biomedical image segmentation,'' in \emph{International Conference on Medical
  image computing and computer-assisted intervention}.\hskip 1em plus 0.5em
  minus 0.4em\relax Springer, 2015, pp. 234--241.

\bibitem{bp1974phd}
P.~Werbos, ``Beyond regression:" new tools for prediction and analysis in the
  behavioral sciences,'' \emph{Ph. D. dissertation, Harvard University}, 1974.

\bibitem{pil2001}
P.~Guo, M.~R. Lyu, and N.~Mastorakis, ``Pseudoinverse learning algorithm for
  feedforward neural networks,'' \emph{Advances in Neural Networks and
  Applications}, pp. 321--326, 2001.

\bibitem{karnet2018}
K.-A. Toh, ``\BIBforeignlanguage{en}{Learning from the kernel and the range
  space},'' in \emph{\BIBforeignlanguage{en}{the {Proceedings} of the {17th}
  2018 {IEEE} {Conference} on {Computer} and {Information} {Science}}}.\hskip
  1em plus 0.5em minus 0.4em\relax IEEE, Jun. 2018, pp. 417--422.

\bibitem{cpnet2021}
H.~Zhuang, Z.~Lin, and K.-A. Toh, ``Correlation projection for analytic
  learning of a classification network,'' \emph{Neural Processing Letters},
  vol.~53, p. 3893–3914, 2021.

\bibitem{interpretable-cnn2019}
C.-C.~J. Kuo, M.~Zhang, S.~Li, J.~Duan, and Y.~Chen, ``Interpretable
  convolutional neural networks via feedforward design,'' \emph{Journal of
  Visual Communication and Image Representation}, vol.~60, pp. 346--359, 2019.

\bibitem{analytic-learning-first1991}
S.~A. Barton, ``{A Matrix Method for Optimizing a Neural Network},''
  \emph{Neural Computation}, vol.~3, no.~3, pp. 450--459, sep 1991.

\bibitem{analytic-learning1992orthogonal}
S.~Chen, P.~Grant, and C.~Cowan, ``Orthogonal least-squares algorithm for
  training multioutput radial basis function networks,'' in \emph{IEE
  Proceedings F (Radar and Signal Processing)}, vol. 139, no.~6.\hskip 1em plus
  0.5em minus 0.4em\relax IET, 1992, pp. 378--384.

\bibitem{analytic-learning1992inversion}
M.~Dawson, J.~Olvera, A.~Fung, and M.~Manry, ``Inversion of surface parameters
  using fast learning neural networks,'' in \emph{IGARSS'92 International
  Geoscience and Remote Sensing Symposium}, vol.~2.\hskip 1em plus 0.5em minus
  0.4em\relax IEEE, 1992, pp. 910--912.

\bibitem{analytic-learning1993}
M.~S. Dawson, A.~K. Fung, and M.~T. Manry, ``Surface parameter retrieval using
  fast learning neural networks,'' \emph{Remote Sensing Reviews}, vol.~7,
  no.~1, pp. 1--18, 1993.

\bibitem{rbf1991univ}
J.~{Park} and I.~W. {Sandberg}, ``Universal approximation using
  radial-basis-function networks,'' \emph{Neural Computation}, vol.~3, no.~2,
  pp. 246--257, 1991.

\bibitem{analytic-learning-finallayer2019noniterative}
X.~{Wang}, T.~{Zhang}, and R.~{Wang}, ``Noniterative deep learning:
  Incorporating restricted boltzmann machine into multilayer random weight
  neural networks,'' \emph{IEEE Transactions on Systems, Man, and Cybernetics:
  Systems}, vol.~49, no.~7, pp. 1299--1308, 2019.

\bibitem{brmp2021}
H.~Zhuang, Z.~Lin, and K.-A. Toh, ``Blockwise recursive {Moore-Penrose} inverse
  for network learning,'' \emph{IEEE Transactions on Systems, Man, and
  Cybernetics: Systems}, pp. 1--14, 2021.

\bibitem{kpnet2021training}
------, ``Training multilayer neural networks analytically using kernel
  projection,'' in \emph{2021 IEEE International Symposium on Circuits and
  Systems (ISCAS)}, 2021, pp. 1--5.

\bibitem{pil2021}
Q.~Yin, B.~Xu, K.~Zhou, and P.~Guo, ``Bayesian pseudoinverse learners: From
  uncertainty to deterministic learning,'' \emph{IEEE Transactions on
  Cybernetics}, pp. 1--12, 2021.

\bibitem{Recomputation2020TPAMI}
Y.~Yang, Q.~M.~J. Wu, X.~Feng, and T.~Akilan, ``Recomputation of the dense
  layers for performance improvement of dcnn,'' \emph{IEEE Transactions on
  Pattern Analysis and Machine Intelligence}, vol.~42, no.~11, pp. 2912--2925,
  2020.

\bibitem{stackcnn2020}
C.-Y. Low, J.~Park, and A.~B.-J. Teoh, ``Stacking-based deep neural network:
  Deep analytic network for pattern classification,'' \emph{IEEE Transactions
  on Cybernetics}, vol.~50, no.~12, pp. 5021--5034, 2020.

\bibitem{sedghi2018the}
\BIBentryALTinterwordspacing
H.~Sedghi, V.~Gupta, and P.~M. Long, ``The singular values of convolutional
  layers,'' in \emph{International Conference on Learning Representations},
  2019. [Online]. Available: \url{https://openreview.net/forum?id=rJevYoA9Fm}
\BIBentrySTDinterwordspacing

\bibitem{MAIER201986}
A.~Maier, C.~Syben, T.~Lasser, and C.~Riess, ``A gentle introduction to deep
  learning in medical image processing,'' \emph{Zeitschrift für Medizinische
  Physik}, vol.~29, no.~2, pp. 86--101, 2019, special Issue: Deep Learning in
  Medical Physics.

\bibitem{random-first1992}
W.~Schmidt, M.~Kraaijveld, and R.~Duin, ``Feedforward neural networks with
  random weights,'' in \emph{Proceedings., 11th IAPR International Conference
  on Pattern Recognition. Vol. II. Conference B: Pattern Recognition
  Methodology and Systems}.\hskip 1em plus 0.5em minus 0.4em\relax IEEE, 1992,
  pp. 1--4.

\bibitem{pil-first1995exact}
C.~P.~C. Guo, Ping and Y.~Sun, ``An exact supervised learning for a three-layer
  supervised neural network,'' in \emph{Proceedings of International Conference
  on Neural Information Processing}.\hskip 1em plus 0.5em minus 0.4em\relax
  IEEE, 1995.

\bibitem{MP-inverse-book2003}
A.~Ben-Israel and T.~N.~E. Greville, \emph{Generalized inverses: theory and
  applications}, 2nd~ed.\hskip 1em plus 0.5em minus 0.4em\relax New York:
  Springer, 2003.

\bibitem{elm-multilayer2017multiple}
D.~Xiao, B.~Li, and Y.~Mao, ``A multiple hidden layers extreme learning machine
  method and its application,'' \emph{Mathematical Problems in Engineering},
  2017.

\bibitem{pcanet2012scalable}
W.-L. Hao and Z.~Zhang, ``Incremental pcanet: A lifelong learning framework to
  achieve the plasticity of both feature and classifier constructions,'' in
  \emph{International Conference on Brain Inspired Cognitive Systems}, 2016.

\bibitem{deepforest2017}
\BIBentryALTinterwordspacing
Z.-H. Zhou and J.~Feng, ``Deep forest: Towards an alternative to deep neural
  networks,'' in \emph{Proceedings of the Twenty-Sixth International Joint
  Conference on Artificial Intelligence, {IJCAI-17}}, 2017, pp. 3553--3559.
  [Online]. Available: \url{https://doi.org/10.24963/ijcai.2017/497}
\BIBentrySTDinterwordspacing

\bibitem{img2col2017ASAP}
A.~Vasudevan, A.~Anderson, and D.~Gregg, ``Parallel multi channel convolution
  using general matrix multiplication,'' in \emph{2017 IEEE 28th International
  Conference on Application-specific Systems, Architectures and Processors
  (ASAP)}, 2017, pp. 19--24.

\bibitem{resnet2016}
K.~He, X.~Zhang, S.~Ren, and J.~Sun, ``Deep residual learning for image
  recognition,'' in \emph{Proceedings of the IEEE Conference on Computer Vision
  and Pattern Recognition (CVPR)}, June 2016.

\bibitem{pytorch2019NeurIPS}
A.~Paszke, S.~Gross, F.~Massa, A.~Lerer, J.~Bradbury, G.~Chanan, T.~Killeen,
  Z.~Lin, N.~Gimelshein, L.~Antiga, A.~Desmaison, A.~Kopf, E.~Yang, Z.~DeVito,
  M.~Raison, A.~Tejani, S.~Chilamkurthy, B.~Steiner, L.~Fang, J.~Bai, and
  S.~Chintala, ``Pytorch: An imperative style, high-performance deep learning
  library,'' in \emph{Advances in Neural Information Processing Systems},
  vol.~32.\hskip 1em plus 0.5em minus 0.4em\relax Curran Associates, Inc.,
  2019.

\bibitem{zhang2016understanding}
C.~Zhang, S.~Bengio, M.~Hardt, B.~Recht, and O.~Vinyals, ``Understanding deep
  learning (still) requires rethinking generalization,'' \emph{Communications
  of the ACM}, vol.~64, no.~3, pp. 107--115, 2021.

\bibitem{batchnorm2015}
S.~Ioffe and C.~Szegedy, ``Batch normalization: Accelerating deep network
  training by reducing internal covariate shift,'' in \emph{Proceedings of the
  32nd International Conference on Machine Learning}, vol.~37.\hskip 1em plus
  0.5em minus 0.4em\relax PMLR, 07--09 Jul 2015, pp. 448--456.

\bibitem{dropout2014}
N.~Srivastava, G.~Hinton, A.~Krizhevsky, I.~Sutskever, and R.~Salakhutdinov,
  ``Dropout: A simple way to prevent neural networks from overfitting,''
  \emph{Journal of Machine Learning Research}, vol.~15, no.~56, pp. 1929--1958,
  2014.

\bibitem{adam2015ICLRposter}
\BIBentryALTinterwordspacing
D.~P. Kingma and J.~Ba, ``Adam: A method for stochastic optimization,'' in
  \emph{ICLR (Poster)}, 2015. [Online]. Available:
  \url{http://arxiv.org/abs/1412.6980}
\BIBentrySTDinterwordspacing

\bibitem{pilae2018s}
P.~Guo, K.~Wang, and X.~Zhou, ``Pilae: A non-gradient descent learning scheme
  for deep feedforward neural networks,'' \emph{arXiv preprint
  arXiv:1811.01545}, 2018.

\bibitem{annet2018}
K.-A. Toh, ``Analytic network learning,'' \emph{arXiv preprint
  arXiv:1811.08227}, 2018.

\bibitem{zagoruyko2016wide}
S.~Zagoruyko and N.~Komodakis, ``Wide residual networks,'' \emph{arXiv preprint
  arXiv:1605.07146}, 2016.

\bibitem{yue2020interventional}
Z.~Yue, H.~Zhang, Q.~Sun, and X.-S. Hua, ``Interventional few-shot learning,''
  \emph{Advances in Neural Information Processing Systems}, vol.~33, 2020.

\end{thebibliography}

% biography section
% 
% If you have an EPS/PDF photo (graphicx package needed) extra braces are
% needed around the contents of the optional argument to biography to prevent
% that's all folks
\end{document}